% sage_latex_guidelines.tex V1.20, 14 January 2017

\documentclass[Afour,sageh,times]{sagej}

\usepackage{moreverb,url}
\usepackage[colorlinks,bookmarksopen,bookmarksnumbered,citecolor=red,urlcolor=red]{hyperref}
\usepackage[dvipsnames]{xcolor}
\usepackage{amsmath} 
\usepackage{amssymb} 
\usepackage{amsthm}
\usepackage{algpseudocode}
\usepackage{algorithm} 
\usepackage{algorithmicx}
\usepackage{stmaryrd}
\usepackage[font=small]{caption}
\usepackage{subcaption}
\usepackage{mathtools}

\newcommand\BibTeX{{\rmfamily B\kern-.05em \textsc{i\kern-.025em b}\kern-.08em
T\kern-.1667em\lower.7ex\hbox{E}\kern-.125emX}}

\newcommand{\Ac}{{\mathcal{A}}}

\newcommand{\Dc}{{\mathcal{D}}}

\newcommand{\Eb}{{\mathbb{E}}}

\newcommand{\Ibb}{{\mathbb{I}}}

% Common loss functions

\newtheorem{prop}{Proposition}
\newtheorem{assumption}{Assumption}

\newtheorem{definition}{Definition}

\newcommand\numberthis{\addtocounter{equation}{1}\tag{\theequation}}

\newcommand{\hX}{{\hat{X}}}
\newcommand{\hY}{{\hat{Y}}}
\newcommand{\hZ}{{\hat{Z}}}
\newcommand{\hF}{{\hat{F}}}

\algrenewcommand\algorithmicrequire{\textbf{Inputs:}}
\algrenewcommand\algorithmicensure{\textbf{Output:}}

% Overwrite some style choices in sagej.cls to better approximate the IJRR appearance
\captionsetup[figure]{font=small,labelfont={bf},textfont={normalfont},labelsep=sageperiod,justification=justified,singlelinecheck=false}
% \captionsetup[table]{position=top,labelsep=sageperiod,font=small,labelfont={bf},textfont={normalfont},justification=justified,skip=2pt,singlelinecheck=false}
\setcounter{secnumdepth}{3}

\begin{document}

% Custom commands
\newcommand{\ssep}{\colon}
\newcommand{\unsafeBag}{B}
\newcommand{\unsafeDataLowScoreCount}{N}
\newcommand{\hrulealg}[0]{\vspace{1mm} \hrule \vspace{1mm}}

\runninghead{Luo, Zhao, et al.}

\title{Sample-Efficient Safety Assurances using Conformal Prediction}

\author{Rachel Luo\affilnum{1}, Shengjia Zhao\affilnum{1}, Jonathan Kuck\affilnum{2}, Boris Ivanovic\affilnum{1}, Silvio Savarese\affilnum{1}, \\ Edward Schmerling\affilnum{1}, and Marco Pavone\affilnum{1}}

\affiliation{\affilnum{1}Stanford University, Stanford, CA 94305, USA \\
\affilnum{2}Dexterity, Inc., Redwood City, CA 94063, USA}

\corrauth{Rachel Luo, Stanford University, Stanford, CA 94305, U.S.A.}
\email{rsluo@stanford.edu}

\begin{abstract}
When deploying machine learning models in high-stakes robotics applications, the ability to detect unsafe situations is crucial. Early warning systems can provide alerts when an unsafe situation is imminent (in the absence of corrective action). To reliably improve safety, these warning systems should have a \textit{provable} false negative rate; i.e.\ of the situations that are unsafe, fewer than $\epsilon$ will occur without an alert. In this work, we present a framework that combines a statistical inference technique known as conformal prediction with a simulator of robot/environment dynamics, in order to tune warning systems to provably achieve an $\epsilon$ false negative rate using as few as $1/\epsilon$ data points. We apply our framework to a driver warning system and a robotic grasping application, and empirically demonstrate the guaranteed false negative rate while also observing a low false detection (positive) rate.
\end{abstract}

\keywords{Safety assurance, Conformal prediction, Statistical inference}

\maketitle

\section{Introduction}
Monitoring a system for faults, or detecting if unsafe situations will occur is a key problem for high-stakes robotics applications, and indeed the field of fault detection has long been the state of practice for building reliable systems~\citep{visinsky1994expert,visinsky1994robotic,visinsky1995dynamic,vemuri1998neural,khalastchi2018fault,muradore2011pls,crestani2015enhancing,Ding2013fault,Patton1997Observer,Harichi2015Model,Harirchi2017Guaranteed}. With the advent of learning-enabled components in robotic systems, robots are performing increasingly complex safety-critical tasks, so reliability has become increasingly important. For instance, in an autonomous driving setting, errors in perception or planning could lead to collision. In a warehouse robotics setting, robots on the factory floor work alongside humans, and not recognizing faults in learned systems could impact safety or even lead to injuries. At the same time, it is less clear how to ensure reliability for these learned systems. These systems are complex, so guaranteeing safety is not something that can be done from first principles --- empirical, data-driven methods are needed.

In this work, we present a sample efficient and principled method for detecting unsafe situations based on the statistical inference technique of conformal prediction~\citep{Vovk2005Algorithmic}. Our method provides \textit{provable} false negative rates for warning systems (i.e.\ among the situations in which an alert should be issued, fewer than $\epsilon$ occur without an alert), while achieving low false positive rates (few unnecessary alerts are issued). 

For example, in a driver assistance system, when an unsafe situation (e.g. another car getting too close) is imminent, our method will issue a warning the vast majority of the time (i.e. at least $1 - \epsilon$ of the time). In a warehouse setting with a robotic pick-and-place system, when the system will fail to grasp and transport an object, our method will issue an alert at least $1 - \epsilon$ of the time.
As a running example in this paper, we use our method to design an alert system to warn a human operator of impending danger in a driving application (illustrated in Figure~\ref{fig:system}). 

\begin{figure*}
    \centering
    \includegraphics[width=0.8\linewidth]{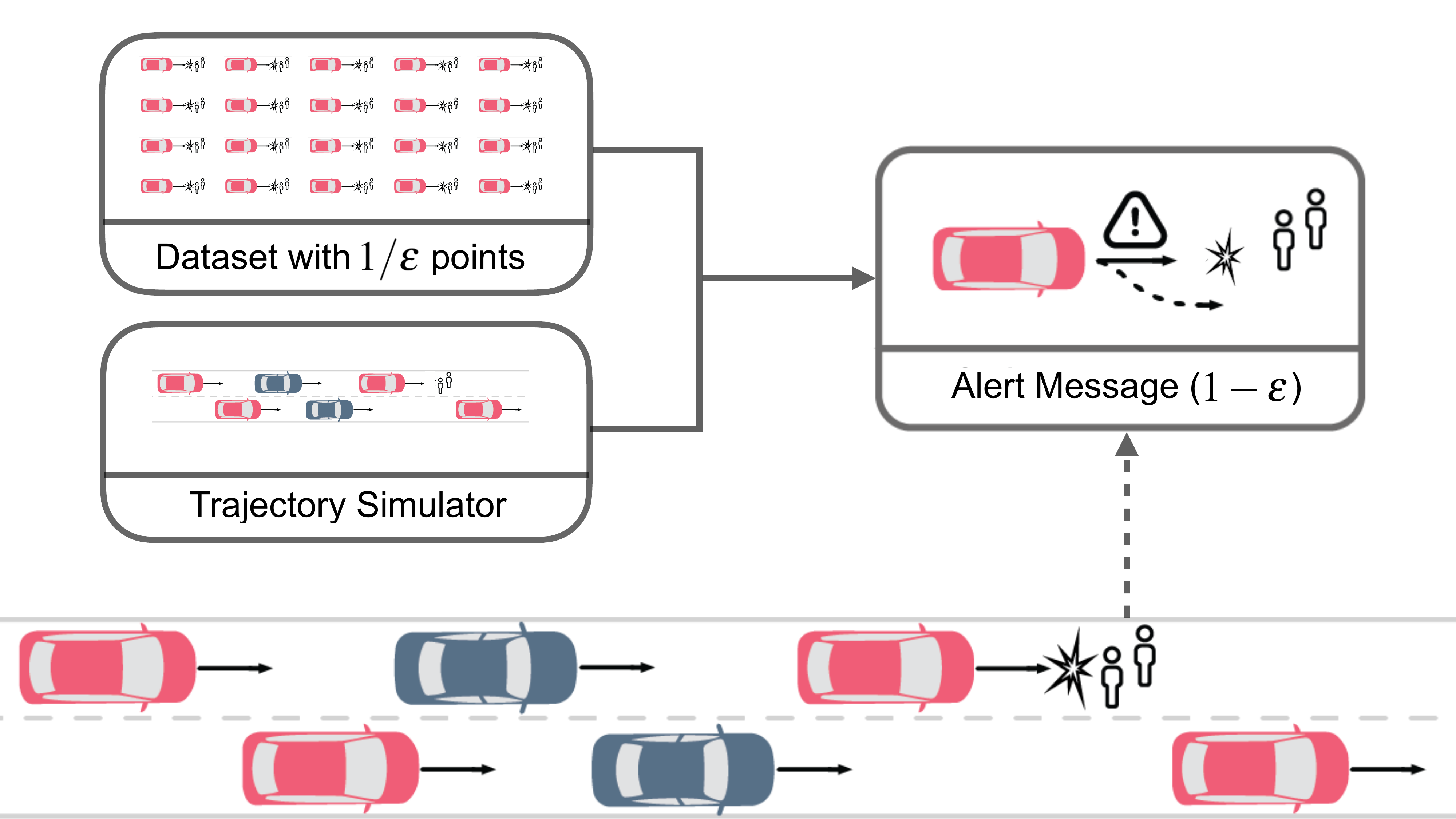}
    \caption{We design a warning system that achieves a \textit{provable} false negative rate sample efficiently. Among the situations that are dangerous (i.e.\ lead to an unsafe future situation in the absence of corrective action), fewer than $\epsilon$ occur without an alert.}
    \label{fig:system}
\end{figure*}

\subsection{Related Work}
Traditional fault detection techniques include hardware redundancy, signal processing, and plausibility tests~\citep{Ding2013fault,visinsky1994expert,visinsky1994robotic,visinsky1995dynamic,vemuri1998neural}. However, 
hardware redundancy requires extra components, signal processing works well only for processes in steady state, and plausibility tests do not catch faults that lead to a physically plausible system. 
Additionally, these methods typically lack performance guarantees. Model-based fault detection techniques~\citep{Ding2013fault,Patton1997Observer,Harichi2015Model,Harirchi2017Guaranteed} involve using a model of the system to determine whether a fault has occurred; they assume that users have a very accurate model of the system dynamics, which is difficult to obtain in practice. 

Another common approach for detecting unsafe states employs supervised learning to train a classifier model for labeling states as unsafe, and then the classifier hyperparameters are adjusted until empirically the false negative rate is low. In practice this is typically accomplished by plotting a receiver operating characteristic (ROC) curve and tuning the classification threshold to achieve low false negative rate. However, this approach requires training a new classification model, and provides no performance guarantees.

To guarantee the false negative rate of a learned warning system, the standard statistical learning framework could be used under standard i.i.d. assumptions~\citep{Luxburg2011Statistical}. A practitioner could collect additional data and use a validation dataset to provably certify the false negative rate. However, the key problem is \textit{data efficiency}, because collecting data for unsafe situations can be very expensive~\citep{Foody2009Sample,Luxburg2011Statistical,Calfiore2006Scenario}. 

\subsection{Contributions}

\paragraph{Main Question} Can we tune a warning system and guarantee a low false negative rate with only a handful of data points? 
For example, with only 30 data samples of dangerous situations, can we tune a warning system to have a \textit{provable} 5\% false negative rate? This problem is easy if we allow trivial systems that always issue a warning, but such systems are not practically useful. 
If we restrict our attention to non-trivial systems, 
this problem is seemingly impossible because even if a \textit{fixed} warning system successfully identifies all 30 dangerous situations, due to statistical fluctuations, we cannot prove that its false negative rate is less than 5\% (with high confidence). If proving that a fixed predictor achieves safety is difficult, tuning a predictor to provably achieve safety seems only more challenging. 

\paragraph{Our Contribution} We answer our main question affirmatively. We adapt a statistical inference framework known as conformal prediction to a robotics setting in order to tune systems to achieve provable safety guarantees (e.g. 5\% false negative rate) with extremely limited data (e.g. 30 samples). We only require a single assumption: the training samples are exchangeable with each test sample, i.e. for each test sample, if we permute the concatenated sequence of the training samples and the test sample, there is no reason to believe that any permutation is more or less likely to occur. This is a weaker assumption than the i.i.d. assumption typically used in the statistical learning framework. 

The assumption is reasonable in practice, even in situations in which the statistical inference assumptions fail (e.g.\ situations with temporal correlations between different test samples). In a driving scenario, for instance, the training dataset could include scene snippets sampled from different scenes, and these will be i.i.d. At test time, there may be one scene with several snippets. These snippets are obviously not independent; however, they are individually exchangeable with the training dataset. 

While this answer seems too good to be true, the key insight here is that we provide a type of guarantee that is different from  standard statistical learning guarantees.
Consider a sequence of test samples $Z_1, \cdots, Z_N$, and event indicators $F_1, \cdots, F_N$ for whether our warning system fails on each test sample. 
\begin{itemize}
    \item \textbf{Statistical learning guarantee.} In the statistical learning framework, we assume that the test samples $Z_1, \cdots, Z_N$ are i.i.d., so the failure events $F_1, \cdots, F_N$ are also i.i.d. --- we guarantee the failure probability for a sequence of i.i.d. failure events.
    \item \textbf{Conformal prediction guarantee.} In the conformal prediction framework, the test samples are not necessarily i.i.d., so the failure events $F_1, \cdots, \allowbreak F_N$ can be correlated  ---  we guarantee the \textit{marginal} failure probability for each failure event. In other words, we know that each test sample has a low probability of failure (i.e. $F_n = 1$ with low probability), but the failures could be correlated. For example, conditioning on $F_n = 1$ might increase or decrease the probability that $F_{n+1} = 1$ (while in the i.i.d. case, $F_n$ and $F_{n+1}$ are independent events).
\end{itemize}

The usefulness of the conformal guarantee depends on the intended application. Consider the driver alert system example: for individual drivers, collisions are rare and most drivers will not encounter more than one. Hence, there is little reason to worry about whether the warning failures are correlated between collisions. In other words, the conformal guarantee can convey confidence to individual users who rarely encounter multiple failures. 

On the other hand, the conformal guarantee may convey less confidence to a company with a large fleet of vehicles. For example, if $F_n = 1$ increases the probability that $F_{n+1} = 1$, then it is possible to have multiple simultaneous failures. However, this is not a limitation of our method, but rather an unavoidable consequence of the weaker (not i.i.d.) assumptions: if the test data is correlated (which we have no control over), then failure events of a warning system are inherently correlated. The weaker assumption is usually necessary because most robotics applications are deployed in time series or sequential decision making setups, so data from nearby time steps are correlated and not i.i.d. 
Since standard statistical learning guarantees are not applicable due to violation of the i.i.d. assumption, having some (conformal) guarantee is better than none. 

Furthermore, we will show empirically in Section~\ref{sec:experiments-driving} that failures are not highly correlated on two real-world driving datasets. Therefore, despite the lack of formal guarantees, there is strong empirical evidence suggesting that simultaneous failures do not occur in practice.

Thus, our contribution is four-fold: 
\begin{enumerate}
    \item We introduce a new notion of safety guarantee that is satisfactory for many use cases and has extremely good sample efficiency.
    \item We show how to leverage the statistical inference tool of conformal prediction for robotics applications.
    \item We instantiate a framework for applying conformal prediction to robotic safety. 
    \item We validate our framework experimentally on both a driver alert safety system and a robotic grasping system, showing that the conformal guarantees hold in practice, without issuing too many false positive alerts (e.g. less than 1\% for many setups). 
\end{enumerate}

A preliminary version of this article was presented at the 2022 Workshop on the Algorithmic Foundations of Robotics~\citep{this-paper}. In this revised and extended version, we additionally contribute: (1) additional exposition on the surrogate safety score in our proposed framework, (2) proofs for the propositions in Section~\ref{sec:framework}, (3) additional exposition on the details of our experimental setups, (4) experimental results demonstrating the tradeoff between $\epsilon$ and the false positive rate (FPR) when there are few samples, (5) experimental results demonstrating empirically that failures are not highly correlated in a real-world driving setting, and (6) experimental results demonstrating that worse surrogate safety scores lead to a performance drop in terms of the false positive rate (but no performance drop in terms of the false negative rate).

\subsection{Organization}
The rest of this paper is organized as follows. In Section~\ref{sec:background}, we review conformal prediction. In Section~\ref{sec:framework}, we describe our problem setup, introduce our framework, and demonstrate that specific choices for elements of our framework lead to instantiations such as tuning an ROC curve threshold to limit false negatives (though we enrich this classic method with new guarantees). We then explain the differences between the conformal prediction guarantees and the statistical learning guarantees, and discuss when our guarantees should be applied. Finally, in Sections~\ref{sec:experiments-driving} and~\ref{sec:experiments-grasp}, we evaluate our framework on a driver alert safety system and on a robotic grasping system.

\section{Overview of Conformal Prediction}
\label{sec:background}

This section provides an overview of conformal prediction, the general framework that we adapt for robotics safety. It may be skipped without breaking the flow of the paper. 

Consider a prediction problem where the input feature is denoted by $X$ and the label is denoted by $Z$. Conformal prediction~\citep{Shafer2008Tutorial} is a class of methods that can produce prediction sets (i.e. a set of labels), such that the true label belongs to the predicted set with high probability. 
In its standard form, 
conformal prediction requires two components: a sequence of validation data $(X_1, Z_1), \cdots, (X_T, Z_T)$ and a non-conformity score $\psi$, which is any function from the input feature $X$ and the label $Z$ to a real number. Intuitively, the non-conformity score should measure the ``unusualness'' of the label $Z$ when the input feature is $X$. An example non-conformity score is $\psi(X, Z) = |h(X) - Z|$ where $h$ is some fixed prediction function --- intuitively, $Z$ is ``unusual'' if the prediction function has large error. 

The conformal prediction algorithm computes the non-conformity score for all samples in the validation set. Given a new test example with input feature $\hX$, the conformal prediction algorithm then ``tries'' all possible labels $z$, and measures the non-conformity score $\psi(\hX, z)$. A label is rejected if the computed non-conformity score is greater than $1 - \epsilon$ of the non-conformity scores in the validation set. Any label that is not rejected is included in the prediction set. Intuitively, the true label is unlikely to have a non-conformity score higher than $1 - \epsilon$ of validation samples; hence the true label is unlikely to get rejected. 

If the training data and the new test data point $(\hX, \hZ)$ are exchangeable, i.e. the probability of observing any permutation of $(X_1, Z_1), \cdots, (X_T, Z_T), (\hX, \hZ)$ is equally likely, then conformal prediction has very strong validity guarantees: the true label will be within the 
prediction set with $1 - \epsilon \pm 1/(T+1)$ probability. We note that this guarantee holds regardless of the nonconformity function $\psi$.

There are many extensions of conformal prediction, and the most relevant extension to our safety application is 
Mondrian conformal prediction~\citep{Vovk2003Mondrian,Vovk2005Algorithmic}, which partitions the input data into several categories such that each data point belongs to exactly one category, and guarantees validity separately for each category. Our work is based on Mondrian conformal prediction; because we wish to limit the false negative rate in warning systems, we need class-conditional validity for samples in the ``unsafe'' class.

Works that apply conformal prediction to classification models in learned systems include~\citet{Angelopoulos2020UncertaintySF},~\citet{Ghosh2023ImprovingUQ}, and~\citet{angelopoulos2022recommendation}. ~\citet{Angelopoulos2020UncertaintySF} and~\citet{Ghosh2023ImprovingUQ} apply conformal prediction to image classifiers to obtain a predictive set containing the true label with a user-specified probability. ~\citet{angelopoulos2022recommendation} studies conformal prediction in the context of recommender systems, and uses it to produce a set of user recommendations.

Works that apply conformal prediction to robotics settings include~\citet{Chen2020ReactiveMP},~\citet{Cai2020RealtimeOD},~\citet{Nouretdinov2011MRI}, and~\citet{Gammerman2008ClinicalMS}. ~\citet{Chen2020ReactiveMP} uses conformal prediction to predict a set of possible future motion trajectories from out of a set of 17 basis trajectories; ~\citet{Cai2020RealtimeOD} uses some ideas from conformal prediction for detecting out of distribution samples in cyber-physical systems; and~\citet{Nouretdinov2011MRI} and~\citet{Gammerman2008ClinicalMS} use conformal prediction for medical diagnosis. However, these works consider very different targeted problems, while we consider the problem of warning systems and provide a general framework for using conformal prediction on a variety of robotics applications.  

\section{Conformal Prediction Framework for Robotics Applications}
\label{sec:framework}

\subsection{Problem Setup} 
\label{sec:setup}

We consider a model-based planning application where we have some existing simulator or model, and given the current observations (denoted by random variable $X$), the simulator or model predicts the future states of the system (denoted by $Y$) in the absence of a warning. For instance, many applications have off-the-shelf simulators: an autonomous driving software might simulate the future trajectories of all traffic participants (up to some time horizon), or an aircraft control software might forward simulate the dynamics of the aircraft. We will use the random variable $Z$ to denote the true unknown future states of the system in the absence of a warning, e.g. the true future trajectories of traffic participants, or the true future dynamics of an aircraft. 

In our setup, depending on the model or simulator available, $Y$ could have the same type as $Z$ (e.g. both $Y$ and $Z$ are random variables that represent the future trajectories of traffic participants), or $Y$ could have a different type from $Z$ (e.g.\ $Y$ might represent some but not all aspects about the future, such as the direction of movement for traffic participants, or the distance from collision). 

Figure~\ref{fig:setup} shows a simplified illustration of our running driver alert system example. In this scene, there is an ego-agent (shown in blue), and an external agent (shown in red), whose position we would like to predict. Here, $X$ represents the current locations of the agents in the scene (i.e. the location of the red car at the bottom of the figure), $Y$ represents the predicted future locations (in this illustration our model predicts that the red car will move up and to the left), and $Z$ represents the true unknown future locations (perhaps the red car actually moves up and to the right).

\begin{figure}
    \centering
    \includegraphics[width=0.75\linewidth]{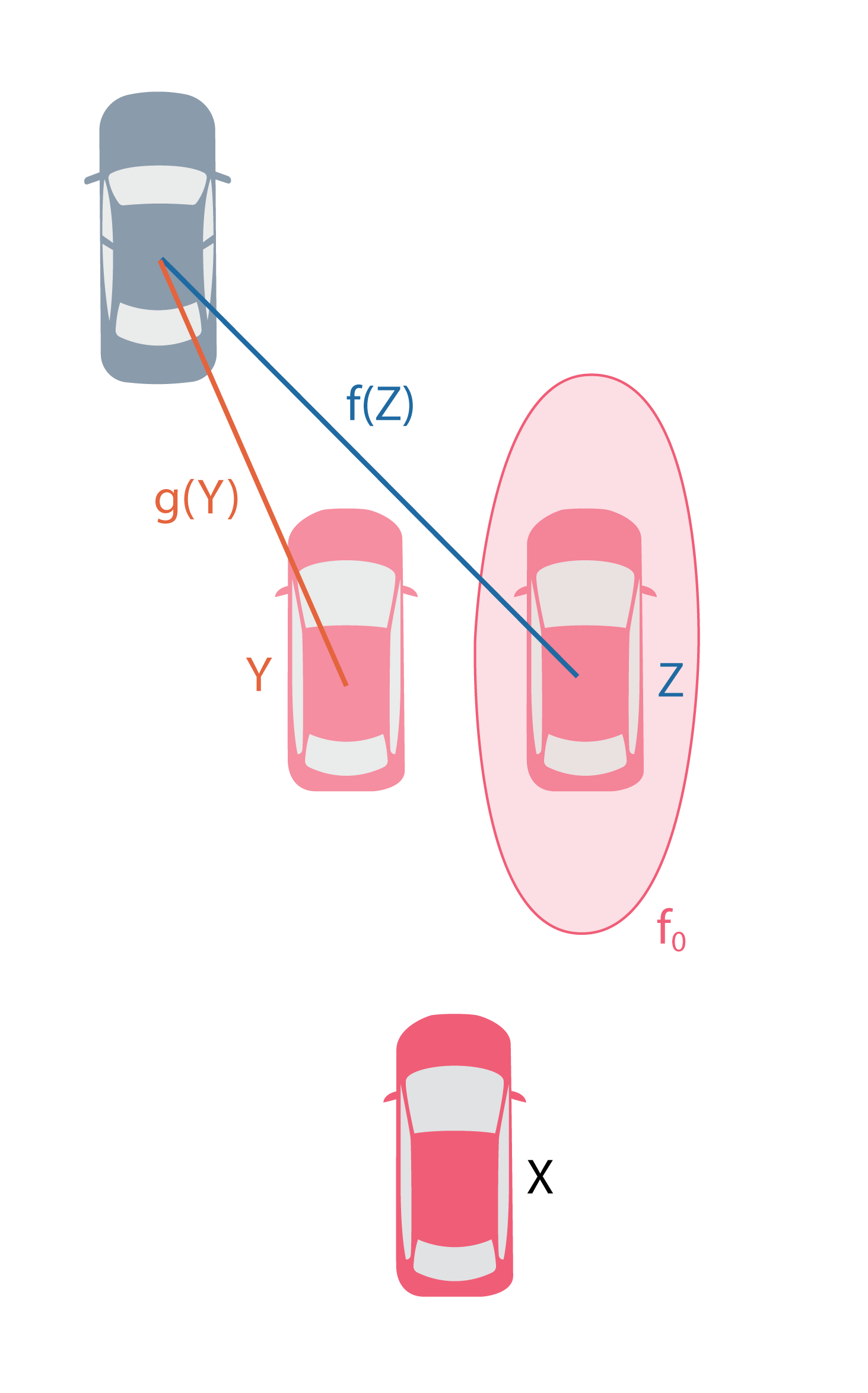}
    \caption{Overview of our problem setup: in this simplified example, there is an ego-agent (shown in {\color{Blue} blue}), and an external agent (shown in {\color{Red} red}), whose position we would like to predict. $X$ represents the current location of the external agent, $Y$ represents the predicted future location of the external agent, and $Z$ represents the true future location of the external agent. $f_0$ represents the distance threshold at which two cars are considered too close together (i.e. the situation is considered unsafe), and $f$ and $g$ are both the distance to the nearest car.}
    \label{fig:setup}
\end{figure}

\subsubsection{Assessing Safety} We assume that if we know the \textit{true} future state of the system $Z$, we can assess whether it is safe or not. Specifically, there exists some \textbf{safety score} denoted by $f(Z)$; we specify some threshold (denoted by $f_0$), and wish to be alerted if the safety score drops below this threshold (i.e.\ if $f(Z) < f_0$). In other words, a situation is defined to be unsafe if the safety score $f(Z)$ is too low. Most applications have natural safety scores. For instance, an autonomous driving safety score $f$ could be the distance to or time from collision; an aircraft control safety score could be the (negative) absolute difference between the orientation of the aircraft and its ideal orientation. 

In addition, we assume that the user provides a \textbf{surrogate safety score} $g: Y \mapsto \mathbb{R}$ that maps from the simulator prediction to a ``safety score,'' where a higher score indicates ``safe'' and a lower score indicates ``unsafe''. Ideally the surrogate safety score $g(Y)$ should be highly correlated with the true safety score $f(Z)$, but technically $g$ can be any function. None of our technical results depend on any assumptions about $g$; however, the choice of $g$ affects the empirical performance in terms of false positive rate (i.e. how often our warning system issues unnecessary alerts). When $Y$ and $Z$ have the same type, we can simply choose $g := f$; when $Y$ and $Z$ have different types we need to choose $g$ on a case-by-case basis. 

For example, in the driver assistance system in Figure~\ref{fig:setup}, the safety score $f(Z)$ could be the distance to the nearest car (shown by the blue line in the figure). Since the simulator $Y$ can output predicted distances to other cars, the surrogate safety score $g(Y)$ could also be the distance to the nearest car (shown by the orange line in the figure). The difference between $f$ and $g$ is that the input to $g$ is the \textit{predicted} future state of the system $Y$ (output by the simulator), rather than the \textit{true unknown} future state of the system $Z$. In this example, $f_0$ is the safety threshold shown by the red boundary; so if the red and blue cars are too close together, then the situation is considered unsafe.

\begin{figure*}[h]
    \centering
    \includegraphics[width=0.75\linewidth]{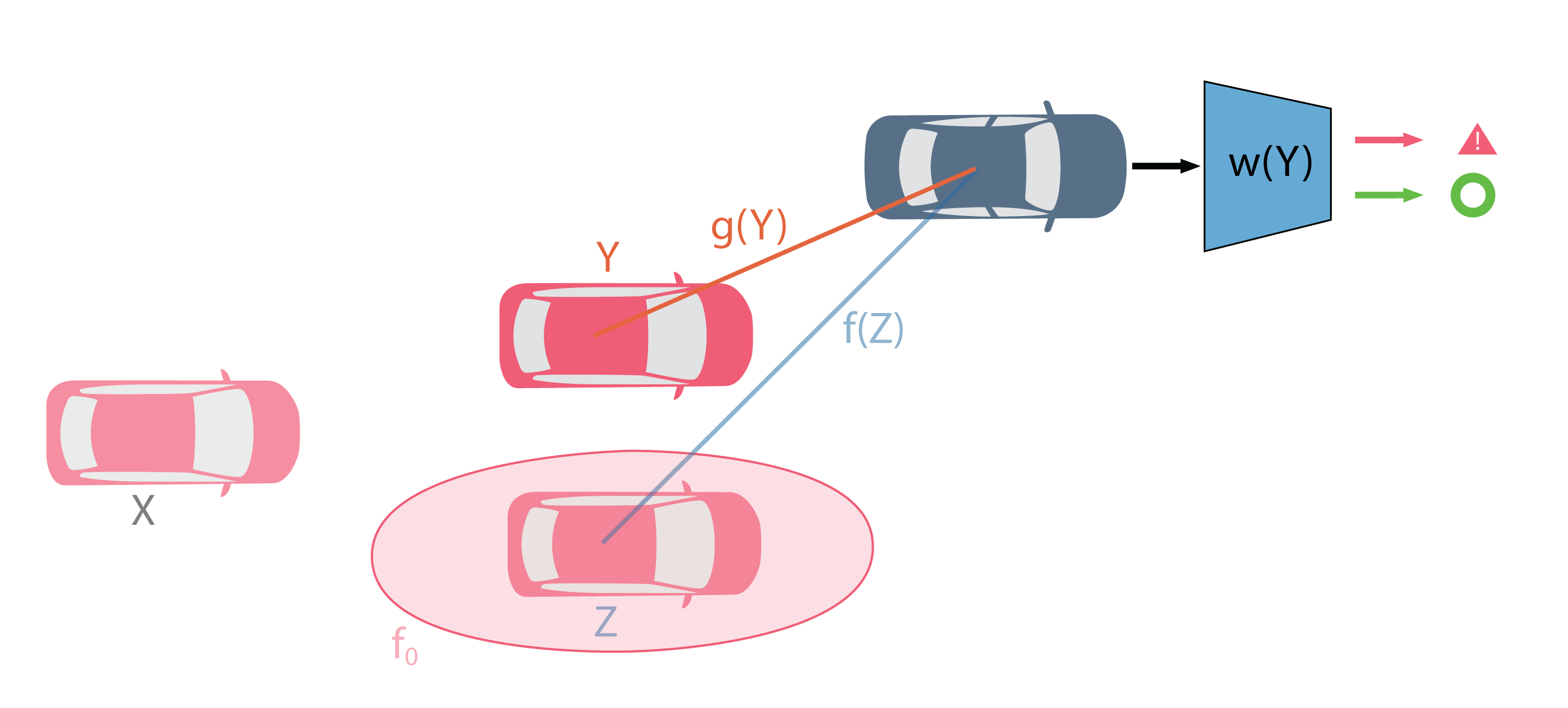}
    \caption{Given the simulation or model output $Y$, the warning function $w(Y)$ either decides to issue a warning ($w(Y) = 1$), or not ($w(Y) = 0$).}
    \label{fig:warning_system}
\end{figure*}

\begin{figure*}
    \centering
    \includegraphics[width=0.75\linewidth]{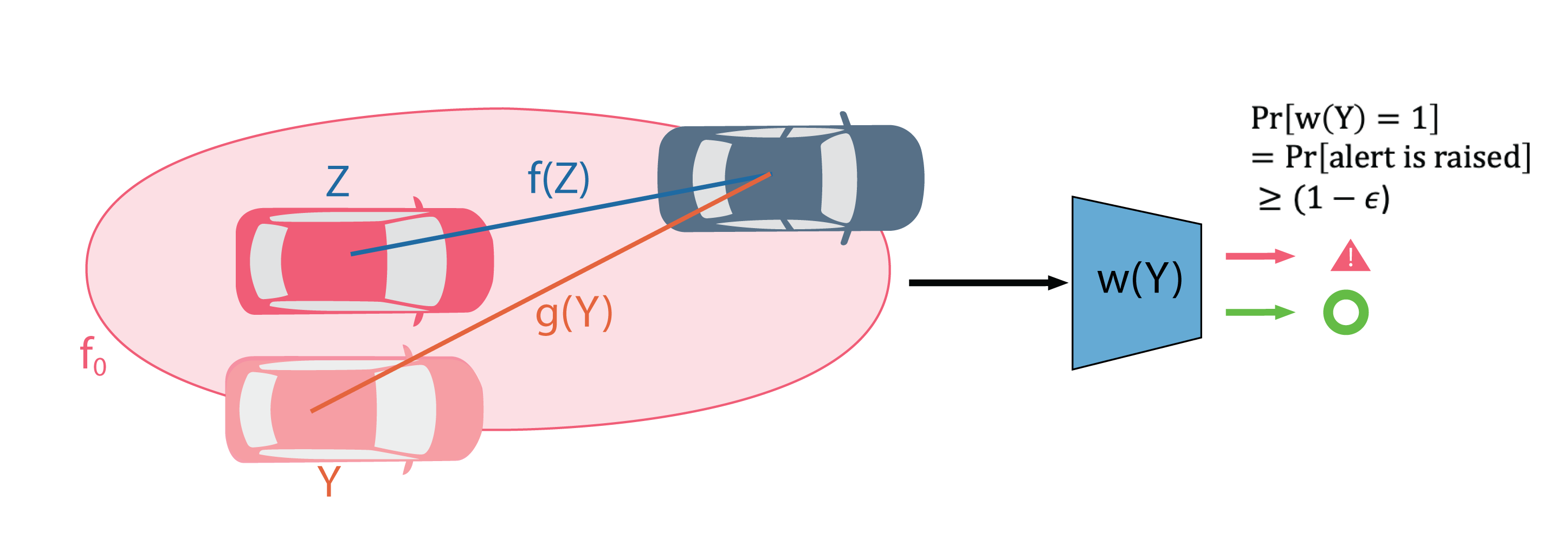}
    \caption{Whenever the true safety score $f(Z)$ is below $f_0$ (i.e. the red car and the blue car will indeed be too close together), the warning system should issue a warning ($w(Y) = 1$) with at least $1 - \epsilon$ probability.}
    \label{fig:epsilon_safety}
\end{figure*}

\subsubsection{Warning Function} We wish to design a warning function (denoted as a function $w(Y)$) that given the simulation or model output $Y$, decides to issue a warning ($w(Y) = 1$) or not ($w(Y) = 0$) (see Figure~\ref{fig:warning_system}). Note that $Y$ depends on previous states, so the warning function implicitly depends on previous observations through $Y$.
Formally we define ``safety'' as the following requirement:
\begin{definition}
\label{def:safe} 
For some $0 < \epsilon < 1$, we say that the warning system $w$ is $\epsilon$-safe (with respect to $Y,Z$, $f$, and $f_0$) if  
\begin{align*}
    \Pr[w(Y) = 1 \mid f(Z) < f_0] \geq 1 - \epsilon.
\end{align*}
\end{definition}
In words, whenever the true future safety score $f(Z)$ is below $f_0$, the warning system should issue a warning ($w(Y) = 1$) with at least $1-\epsilon$ probability (see Figure~\ref{fig:epsilon_safety}). Another way to think of this is that the false negative rate is at most $\epsilon$. The main difficulty here is that the warning function $w$ can depend on only the simulated future $Y$ rather than the true future $Z$ (which is not yet observed when the warning is issued), and the simulation might not come with any performance guarantees. 

A trivial warning system that always issues a warning (i.e. $w_{\mathrm{trivial}}(Y) \equiv 1$) is always $\epsilon$-safe for any $\epsilon > 0$. However, such a warning system is not useful. A useful warning system should issue as few warnings as possible when safe. Therefore, we should also consider its false positive rate
\begin{align*}
    \mathrm{FPR}(w) = \Pr[w(Y) = 1 \mid f(Z) \geq f_0].
\end{align*}
The false positive rate is of lower priority for safety because issuing an unnecessary warning might only be an inconvenience, while failing to issue a warning when the situation is unsafe can lead to catastrophic outcomes. In summary, our goal is to design a warning function $w(\cdot)$ such that:

\textbf{Goal:} Provably achieve $\epsilon$-safety for small $\epsilon$ (e.g. $0.02$), while achieving low  false positive rate (FPR). 

\subsubsection{Examples} A few examples that illustrate this problem setup are as follows: 
\begin{enumerate}
    \item In a driver alert system, users may want an assurance that among the instances in which the driver is in a dangerous situation, the system will issue a warning the vast majority of the time. The safety score in this case could be the time to collision (TTC), or the nearest distance from another car. 
    \item In a multi-arm robot collaboration system, users may want an assurance that among the instances in which the robot arms may collide, the system will issue a warning the majority of the time. The safety score could be the nearest distance to another robot arm. 
    \item In a warehouse robotic box-stacking system, users may want an assurance that among the instances in which the boxes will topple, the system will issue a warning the majority of the time. The safety score could be the probability of a stable stack. 
    \item In a coffee shop with a robotic barista, users may want an assurance that among the instances in which the robot may spill hot coffee, the system will issue a warning the majority of the time. The safety score could be the probability of a successful pour.
\end{enumerate}

Examples 3 and 4 can be thought of as ROC curve threshold tuning. If the model used is a binary classifier that predicts whether there is a stable stack, we can use the predicted probability of a ``safe'' outcome 
as $g$. Note that in this special case, our method also tunes the threshold, but adds guarantees on the false negative rate and practical guidelines for sample complexity. 

\subsection{Analysis of the Trade-off Between the FNR and FPR} 
\label{sec:fpr}

In this section, we analyze the fundamental trade-off between the false negative rate (FNR) and the false positive rate (FPR) of a warning system. For example, a trivial system that always issues a warning will have a 0\% FNR but 100\% FPR. Conversely, a system that never issues a warning will have a 0\% FPR but 100\% FNR. This suggests a trade-off between the achievable FNR and FPR. 

\subsubsection{Infinite Validation Data Regime} Even with infinite validation data, we may not be able to achieve both perfect FNR and perfect FPR because of inherent limitations of the safety score. For instance, at one extreme, if the safe and unsafe examples have identical safety score distributions, then there is no way to distinguish them according to Definition~\ref{def:safe}. At the other extreme, if the safe and unsafe examples have disjoint safety score distributions, then we can distinguish them perfectly (i.e. achieve 0\% FPR and 0\% FNR). A typical real world scenario will likely fall somewhere in between the two extremes, as illustrated in Figure~\ref{fig:optimal_fpr}. The key quantity is the amount of overlap between the safety score distribution for safe vs. unsafe examples, which will dictate the optimal achievable trade-off between the FPR and the FNR. 

\begin{figure}
    \centering
    \includegraphics[width=\linewidth]{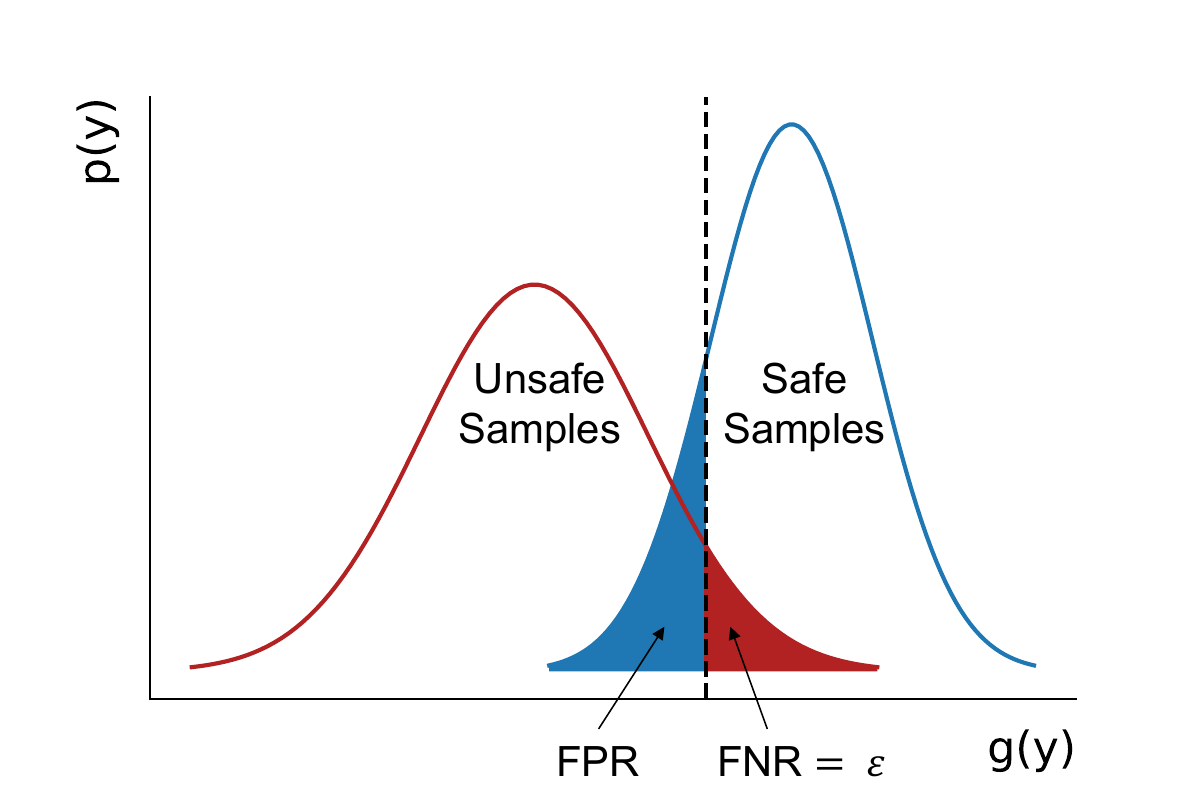}
    \caption{Even in the limit of infinite validation data, the best false positive rate achievable (for a given $\epsilon$-safety level) is determined by the distribution of the safe samples and the unsafe samples under the surrogate safety score function $g$.}
    \label{fig:optimal_fpr}
\end{figure}

\subsubsection{Finite Validation Data Regime} The lack of sufficient validation data is another source of error that degrades the best achievable FNR/FPR trade-off. Intuitively, because we need to \textit{provably guarantee} the FNR for $\epsilon$-safety, in the absence of sufficient validation data, we must be conservative and issue more warnings than necessary. For example, with zero validation data we have no choice but to issue a warning for nearly every example, leading to a high FPR. 
In fact, we show in Proposition~\ref{prop:fpr_lower_bound} in Section~\ref{sec:algorithm} that if we have fewer than $T$ data samples, we cannot guarantee better than $O(1/T)$ FNR without incurring an FPR of close to $1$ for any distribution-free warning function that depends only on the ordering of the $g(Y)$ values. 

Our conformal algorithm can guarantee an $O(1/T)$ FNR while the FPR is not much higher than in the infinite data regime, demonstrating the (asymptotic) optimality of the conformal algorithm presented in Algorithm~\ref{alg:general2}.

\subsection{Algorithm to Achieve Guaranteed Safety Assurances}
\label{sec:algorithm}

In this section, we will describe an algorithm that achieves $\epsilon^*$-safety with a low (nontrivial) false positive rate. The setup is as described in Section \ref{sec:setup}, with current observations $X$, simulator $Y$, and true unknown future states $Z$.

If the simulation $Y$ is perfect and has the same type as the ground truth future state $Z$, i.e. $Z = Y$ almost surely, then we can simply set $g = f$ and choose $w(Y) = \mathbb{I}(g(Y) < f_0)$, and this $w$ will automatically satisfy our definition of safety. However, in most applications, it is difficult to provide any guarantees on the accuracy of the simulation. For example, in autonomous driving situations, traffic participants can behave in unexpected and hard to predict ways. 

When we are uncertain about the simulation accuracy, we will require an additional training dataset. With a dataset of (simulated future state, true future state) pairs  $(Y_1, Z_1)$, $(Y_2, Z_2), \cdots, (Y_T, Z_T)$, where $T$ is the number of samples, we can guarantee $\epsilon$-safety. Let $(\hY, \hZ)$ denote a new test sample. We require only a single assumption on the dataset:

\begin{assumption} \label{assump:exchangeable}  
The sequence $(Y_1, Z_1)$, $(Y_2, Z_2)$, $\cdots$, $(Y_T, Z_T)$, $(\hY, \hZ)$ is exchangeable, i.e. the probability of observing any permutation of the sequence is equally likely.
\end{assumption}

Exchangeability is a strong assumption. However, it is weaker than typical i.i.d. assumptions that underlie most machine learning methods with performance guarantees: if a sequence of data is i.i.d., then it is also exchangeable. 
In addition, if the distribution shifts, it is not prohibitively costly to collect a new training dataset from the shifted distribution. This is because we require only a very small dataset (e.g. in most of our experiments, the training dataset contains only about 50 examples of unsafe situations).

Based only on Assumption~\ref{assump:exchangeable}, we design an algorithm to guarantee safety on test data. %$(\hY, \hZ)$. 
The algorithm can be thought of as an instantiation of the conformal prediction framework. 

\begin{algorithm}
\caption{Approximate $\epsilon$-safety}
\begin{algorithmic}[1]
\Require{Training dataset $(Y_1, Z_1), (Y_2, Z_2), \cdots, (Y_T, Z_T)$, Surrogate safety score $g$, True safety score $f$, Threshold $f_0$; A new simulation $\hY$}
\Ensure{$\{0, 1\}$}
\hrulealg
\State Compute 
\begin{equation*}
    \Ac = \big\lbrace g(Y_t) \ssep f(Z_t) < f_0, t= 1, \cdots, T \big\rbrace.
\end{equation*}
\State Sample $U$ uniformly from
\begin{equation*}
    U \sim \big\lbrace 0, 1,\cdots, |\lbrace a \in \Ac \ssep a = g(\hY) \rbrace | \big\rbrace.
    % U \sim \bigg\lbrace 0, 1,\cdots, \Big|\big\lbrace a \in \Ac \ssep a = g(\hY) \big\rbrace \Big| \bigg\rbrace.
\end{equation*}
\State Compute 
\begin{equation*}
    q = \frac{\ | \lbrace a \in \Ac \ssep a < g(\hY) \rbrace | + U   + 1}{|\Ac|+1}.
    % q = \frac{\ \Big| \big\lbrace a \in \Ac \ssep a < g(\hY) \big\rbrace \Big| + U   + 1}{|\Ac|+1}
\end{equation*}
\State If $q \leq 1 - \epsilon$ then \textbf{output} 1, otherwise \textbf{output} 0 \label{alg2:line1}.
\end{algorithmic}
\label{alg:general2} 
\end{algorithm} 

Intuitively, the procedure is as follows. We first compute a predicted safety score (based on the simulator outputs) for each unsafe sample in the training dataset (Line 1). We then sample a number from a uniform distribution between 0 and $|\lbrace a \in \Ac \ssep a = g(\hY) \rbrace |$, where $|\lbrace a \in \Ac \ssep a = g(\hY) \rbrace |$ is the number of unsafe training data points with surrogate safety scores that are equal to the surrogate safety score of the new simulation $g(\hY)$ (Line 2). \footnote[2]{This randomization factor $U$ is simply a small correction factor that guarantees exact coverage in case of ties (see~\citet{Vovk2005Algorithmic}).} We next compute the quantile value for the new test simulation, i.e.\ the proportion of validation samples with a lower safety score than $\hY$ (Line 3), with a small randomization factor from the previous step. If this quantile value is smaller than $1 - \epsilon$ (i.e. fewer than $1 - \epsilon$ of the unsafe samples from the training set have a lower safety score), we say that this may be an unsafe situation and issue an alert; otherwise, we say that it is safe (Line 4). 

The following proposition shows that Algorithm~\ref{alg:general2} can guarantee safety. 

\begin{prop}
\label{prop:safety2} 
Algorithm~\ref{alg:general2} is $\epsilon + 1/(1+|\Ac|)$-safe (with respect to $\hY$ and $\hZ$), under Assumption 1.
\end{prop}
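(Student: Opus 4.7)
The plan is to recognize Algorithm~\ref{alg:general2} as a Mondrian conformal predictor restricted to the ``unsafe'' class, and then reduce the safety guarantee to the classical fact that the randomized rank of an exchangeable element is uniform. Specifically, I would condition on the event $\{f(\hZ) < f_0\}$ and try to show that the scalar $q$ computed by the algorithm is exactly uniform on $\{1/N, 2/N, \dots, 1\}$ with $N := |\Ac|+1$, from which $\Pr[q \leq 1-\epsilon \mid f(\hZ) < f_0] = \lfloor(1-\epsilon)N\rfloor/N \geq 1 - \epsilon - 1/N$, giving the claimed bound.

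First I would argue the exchangeability reduction. By Assumption~\ref{assump:exchangeable} the full sequence $(Y_1,Z_1),\dots,(Y_T,Z_T),(\hY,\hZ)$ is exchangeable, and exchangeability is preserved when we pass to the sub-sequence of indices on which the symmetric predicate $f(Z_i)<f_0$ (resp. $f(\hZ)<f_0$) holds. Thus, conditional on $\{f(\hZ)<f_0\}$ and on the multiset $\hat\Ac := \Ac \cup \{g(\hY)\}$, the test score $g(\hY)$ is a uniformly random element of $\hat\Ac$ (all $N$ positions are equally likely).

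Next I would handle the tie-breaking carried out by $U$, which is the only genuinely delicate step. Write the distinct values in $\hat\Ac$ as $v_1<\cdots<v_k$ with multiplicities $m_1,\dots,m_k$ summing to $N$. Conditional on $g(\hY)=v_j$ (which happens with probability $m_j/N$ by the previous paragraph), the quantity $L:=|\{a\in\Ac\colon a<g(\hY)\}|$ equals $\sum_{i<j}m_i$ deterministically, and $U$ is uniform on $\{0,1,\dots,m_j-1\}$ (since $|\{a\in\Ac\colon a=g(\hY)\}|=m_j-1$). Hence $R:=L+U+1$ takes each value in $\{\sum_{i<j}m_i+1,\dots,\sum_{i\le j}m_i\}$ with probability $1/m_j$. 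Multiplying by the marginal $m_j/N$ and summing over $j$ shows that, unconditionally on $j$, $R$ is uniform on $\{1,\dots,N\}$. Therefore $q=R/N$ is uniform on $\{1/N,\dots,1\}$, and $\Pr[q\le 1-\epsilon \mid f(\hZ)<f_0, \hat\Ac] = \lfloor(1-\epsilon)N\rfloor/N \geq 1-\epsilon-1/N$. Averaging over $\hat\Ac$ (while noting that $N=|\Ac|+1$ is a function of the conditioning) gives $\Pr[w(\hY)=1\mid f(\hZ)<f_0]\geq 1-\epsilon-1/(|\Ac|+1)$, which is exactly $\epsilon+1/(1+|\Ac|)$-safety.

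The main obstacle I anticipate is precisely the tie-handling step: without the randomization $U$ the rank would concentrate on ``tier boundaries'' and one could only obtain a weaker bound whose slack depends on the largest multiplicity. Showing that the particular discrete uniform choice of $U$ on $\{0,\dots,|\{a\in\Ac\colon a=g(\hY)\}|\}$ exactly symmetrizes the rank requires the careful conditional decomposition above. The remaining pieces (invoking exchangeability on the unsafe sub-sequence, and the floor-division slack $\lfloor(1-\epsilon)N\rfloor/N\geq 1-\epsilon-1/N$) are essentially bookkeeping.
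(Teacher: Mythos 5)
Your proposal is correct and follows essentially the same argument as the paper's proof: condition on the unordered bag of unsafe scores, invoke exchangeability (restricted to the unsafe class) to conclude that the randomized rank $|\{a\in\Ac\colon a<g(\hY)\}|+U+1$ is uniform on $\{1,\dots,|\Ac|+1\}$, and then bound the quantile event, losing only the discretization term $1/(1+|\Ac|)$. The only difference is presentational: you work out the tie-breaking uniformity via an explicit conditional decomposition over the distinct score values, whereas the paper asserts this uniformity directly (with a footnote on the role of $U$).
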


\begin{proof}
Given a dataset 
\begin{equation*}
    (Y_1, Z_1), (Y_2, Z_2), \cdots, (Y_T, Z_T),
\end{equation*}
where each data point contains the (simulated future state, true future state), denote the subset of ``unsafe'' data as 
\begin{equation*}
    (Y_{c_1}, Z_{c_1}), (Y_{c_2}, Z_{c_2}), \cdots, (Y_{c_M}, Z_{c_M}),
\end{equation*}
where $Z_{c_t}$ is the $t$-th unsafe example (i.e. $f(Z_{c_t}) < f_0$).  For typographical clarity throughout this proof, we use $M$ as shorthand to represent the number of unsafe data points,
\begin{equation*}
M = |\Ac| = \big| \big\lbrace t \ssep f(Z_t) < f_0  \big\rbrace \big|,
\end{equation*}
and we use $\unsafeDataLowScoreCount$ to represent the number of unsafe data points with surrogate safety score less than $g(\hat{Y})$,
\begin{align*}
\unsafeDataLowScoreCount &= \big| \big\lbrace t \ssep g(Y_{c_t}) < g(\hat{Y}) \big\rbrace \big| 
= \big| \big\lbrace a \in \Ac \ssep a < g(\hY) \big\rbrace \big|.
\end{align*}

Suppose that $\hat{Z}$ is also unsafe, i.e. $f(\hat{Z}) < f_0$. Let $\lbag \cdot \rbag$ denote an unordered bag (i.e. it is a set that can have repeated elements).  We use $B$ to represent the unordered bag of unsafe data,
\begin{equation*}
\unsafeBag = \lbag (Y_{c_1}, Z_{c_1}), \cdots, (Y_{c_M}, Z_{c_M}), (\hat{Y}, \hat{Z}) \rbag.
\end{equation*}

To bound the safety, we first note that the probability of a false negative is given by
\begin{flalign*}
    \Pr[w(\hat{Y}) = 0] & = \Pr[q > 1 - \epsilon] \\
    &= \Eb\Bigl[ \Pr\bigl[q > 1 - \epsilon \big| \unsafeBag \bigr] \Bigr] \hspace{5.1em} \text{(Tower)}\\
    % && \mathllap{ \text{(Tower)} \hspace{1.2em} }\\
    &= \Eb\Bigl[ \Pr\bigl[ \unsafeDataLowScoreCount + U > (1 - \epsilon)(M+1) - 1 \big| \unsafeBag \bigr]\Bigl] \\
    && \mathllap{ \text{(Definition)} \hspace{1.2em} }
\end{flalign*}

By the assumption of exchangeability, we are equally likely to observe any permutation of $\unsafeBag$. Intuitively, $g(\hat{Y})$ is equally likely to be the largest, 2nd largest, etc., among $g(Y_{c_1}), \cdots, g(Y_{c_M}), g(\hat{Y})$. Formally, the random variable $N + U$ takes on all values $\lbrace 0, 1, \cdots, M \rbrace$ with equal probability.\footnote[3]{Note that the maximum value of $U$ is the number of unsafe training data points with surrogate safety scores that are equal to the surrogate safety score of the new simulation, and so $| \lbrace a \in \Ac \ssep a < g(\hY) \rbrace | + |\lbrace a \in \Ac \ssep a = g(\hY) \rbrace | = M $.} Therefore, 

\begin{align*}
    &\Pr\biggl[ \unsafeDataLowScoreCount + U > (1 - \epsilon)(M+1) - 1 \Big| \unsafeBag \biggr] \\
    & = 1 - \Pr\biggl[ \unsafeDataLowScoreCount + U \leq (1 - \epsilon)(M+1) - 1 \Big| \unsafeBag \biggr] \\
    & \leq 1 - \frac{\lceil(1 - \epsilon)(M+1)-1 \rceil}{M+1}  \\
    & = \frac{\lfloor M+1 - (1-\epsilon)(M+1) + 1 \rfloor}{M+1}  \\
    & \leq \frac{1 + \epsilon M + \epsilon}{M+1} \\
    & = \epsilon + \frac{1}{M+1}.
\end{align*}

We can combine this result with the previous statement of the probability of a false negative to get 
\begin{align*}
    &\Pr[w(\hat{Y}) = 0] \\
    &= \Eb\Biggl[ \Pr\biggl[ \unsafeDataLowScoreCount + U >  (1 - \epsilon)(M+1) - 1 \Big| \unsafeBag \biggr] \Biggr] \\
    &\leq \Eb\left[\epsilon + \frac{1}{M+1} \right]  \\
    &= \epsilon + \frac{1}{M+1} 
\end{align*}
as required. \qed

\end{proof}

To use Proposition~\ref{prop:safety2} to provide safety guarantees, we choose $\epsilon$ based on the number of samples available $|\Ac|$. Specifically, if the desired safety level is $\epsilon^*$, then we can choose any $\epsilon < \epsilon^*$ in Algorithm~\ref{alg:general2} such that 
\begin{align} 
\epsilon + 1/(1+|\Ac|) \leq \epsilon^*\label{eq:sample_complexity1}
\end{align} 
In other words, if our choice of $\epsilon$ satisfies Eq. (\ref{eq:sample_complexity1}), then Algorithm~\ref{alg:general2} will be $\epsilon^*$-safe. 
Intuitively, choosing a large $\epsilon$ decreases the false positive rate (FPR). This is because according to Algorithm~\ref{alg:general2} Line \ref{alg2:line1}, choosing a larger $\epsilon$ decreases the number of times that a warning is output. Therefore, based on the number of samples in the training dataset $|\Ac|$, we  choose the largest $\epsilon$ that satisfies Eq. (\ref{eq:sample_complexity1}); i.e. we choose 
\[\epsilon = \epsilon^* - 1/(1+|\Ac|). \] 
We will call $1/(1+|\Ac|)$ the discretization error. 

Proposition~\ref{prop:safety2} also reveals the sample complexity of the conformal prediction algorithm. If the number of unsafe examples is too small ($|\Ac| \leq 1/\epsilon^* - 1$), then we must choose $\epsilon < 0$ to ensure $\epsilon^*$-safety according to Proposition~\ref{prop:safety2}. Algorithm~\ref{alg:general2} with $\epsilon < 0$ will trivially always output $1$ (i.e. always issue a warning). On the other hand, if the number of unsafe examples exceeds the threshold ($|\Ac| > 1/\epsilon^* - 1$), then there will be an $\epsilon > 0$ that ensures $\epsilon^*$-safety according to Proposition~\ref{prop:safety2}. Consequently, Algorithm~\ref{alg:general2} will not be trivial. In practice, we find that to get good results and a low false positive rate, it is sufficient to have sample count $|\Ac|$ that exceeds the threshold by a small margin, such as $|\Ac| = 1.5/\epsilon^*-1$. For example, to achieve a 5\% false positive rate, it is sufficient to have only about 30 unsafe examples.

Proposition~\ref{prop:fpr_lower_bound} demonstrates that Algorithm~\ref{alg:general2} is asymptotically optimal, since it can guarantee an $O(1/T)$ FNR while the FPR is not much higher than in the infinite data regime.

\begin{prop}
\label{prop:fpr_lower_bound} 
There is no $\epsilon$-safe warning system based on the ordering of $g(Y)$ values that can achieve a false positive rate lower than $1-(1+T)\epsilon$.
\end{prop}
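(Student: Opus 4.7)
The plan is to construct a single exchangeable distribution on $(Y_1, Z_1),\ldots,(Y_T, Z_T),(\hY, \hZ)$ under which any rank-based warning function has $\mathrm{FNR} + \mathrm{FPR} = 1$, so that the $\epsilon$-safety constraint translates directly into a lower bound on the FPR.

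First, I would parametrize the warning function. Any $w$ depending only on the ordering of $g(\hY)$ relative to $g(Y_1),\ldots,g(Y_T)$ is fully specified by a (possibly randomized) rule $p_r := \Pr[w(\hY)=1 \mid R=r]$ for $r = 1,\ldots,T+1$, where $R$ is the rank of $g(\hY)$ among the $T+1$ surrogate scores.

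Second, I would construct the adversarial distribution: let the $T+1$ pairs $(Y,Z)$ be i.i.d., with $f(Z) < f_0$ (unsafe) holding with some probability $\pi \in (0,1)$ independently of $g(Y)$, and with $g(Y)$ drawn from an atomless distribution. Exchangeability is immediate from i.i.d.-ness, and because the safety label is independent of $g(Y)$ and $g(Y)$ is continuous, $R$ is uniform on $\{1,\ldots,T+1\}$ conditional on either $\{f(\hZ)<f_0\}$ or $\{f(\hZ) \geq f_0\}$.

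Third, I would translate the definitions on this distribution: $\mathrm{FNR}(w) = \tfrac{1}{T+1}\sum_{r=1}^{T+1}(1-p_r)$ and $\mathrm{FPR}(w) = \tfrac{1}{T+1}\sum_{r=1}^{T+1} p_r$, whence $\mathrm{FNR}(w) + \mathrm{FPR}(w) = 1$. The $\epsilon$-safety hypothesis $\mathrm{FNR}(w) \leq \epsilon$ then gives $\mathrm{FPR}(w) \geq 1 - \epsilon \geq 1 - (1+T)\epsilon$, which is the stated inequality.

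The only mild subtlety is ensuring the rank $R$ is unambiguous; taking the $g$-distribution to be atomless handles this, and alternatively a symmetric tie-breaking convention preserves the uniform-rank conclusion. Notice that the argument actually yields the sharper bound $\mathrm{FPR} \geq 1 - \epsilon$; the weaker stated form $1 - (1+T)\epsilon$ is the one that most transparently conveys the sample-complexity moral that the FPR is forced close to $1$ whenever $\epsilon \ll 1/(T+1)$, which is presumably why the authors chose to state the bound in that form.
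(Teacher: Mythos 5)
Your proof is correct, but it takes a genuinely different route from the paper and, strictly speaking, establishes a different quantification over distributions. The paper restricts attention to the parametric family of Equation~\ref{eq:test_function_form} (a deterministic rank rule $\phi$ applied with probability $\gamma$, blended with ``always warn'' with probability $1-\gamma$), lower bounds the FNR by $\gamma/(1+T)$ via uniformity of the test rank under exchangeability, lower bounds the FPR by $1-\gamma$ on \emph{every} data distribution, and chains the two; its conclusion is that, within that family, a distribution-free FNR guarantee of $\epsilon$ forces $\mathrm{FPR} \ge 1-(1+T)\epsilon$ no matter what the deployment distribution is. You instead exhibit a single adversarial i.i.d. distribution in which the safety label is independent of the surrogate score, so that any rule which cannot see the test label satisfies $\mathrm{FNR}+\mathrm{FPR}=1$ there, and $\epsilon$-safety then gives $\mathrm{FPR} \ge 1-\epsilon \ge 1-(1+T)\epsilon$ on that distribution. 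Your route buys generality---it covers arbitrary randomized rank rules, and the underlying independence argument in fact covers \emph{any} warning function, which addresses the paper's conjecture in the worst-case-over-distributions sense---plus a sharper constant; what it gives up is that the high FPR is only exhibited on the adversarial distribution (``cannot guarantee a low FPR''), whereas the paper's structural argument shows the rule must mix in ``always warn'' with weight at least $1-(1+T)\epsilon$ and hence incurs a high FPR on the actual deployment distribution as well. Two smaller remarks: the paper's rules are ranked against the \emph{unsafe} training examples only and may use the training labels, so they are not literally parametrized by your $p_r$---your conclusion survives because, under your construction, the test label is independent of everything the rule observes---and your closing speculation about the constant is off: $1-(1+T)\epsilon$ is not a cosmetic weakening of $1-\epsilon$ but the tight constant for the every-distribution version of the claim (the rule that warns unless the test score has the top rank, and in that case warns with probability $1-(1+T)\epsilon$, has distribution-free FNR at most $\epsilon$ yet attains FPR exactly $1-(1+T)\epsilon$ on a distribution whose safe test points always rank on top).
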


We conjecture that Proposition~\ref{prop:fpr_lower_bound} holds for all functions, but we prove it for the fairly general class of functions specified by Equation~\ref{eq:test_function_form} below, comparing the surrogate safety score for our new test sample $g(\hY)$ relative to the $g(Y_1), \cdots, g(Y_T)$ values.

\begin{proof}

Consider a function $w$ that maps a dataset $\Dc = (g(Y_1), Z_1), \cdots, (g(Y_T), Z_T)$ of unsafe examples, and a new data point $g(\hY)$, to $\lbrace 0, 1\rbrace$. Let $w$ be a warning function that gives a distribution-free false negative rate guarantee that depends only on the ordering between $g(Y_1), \cdots, g(Y_T), g(\hY)$ (rather than on their specific values). In other words, $w$ takes the form defined by 
\begin{flalign}
    w(\Dc, \hY) = \left\lbrace \begin{array}{ll} \phi \left( \big| \big\lbrace t \ssep g(\hY) < g(Y_t) \big\rbrace \big| \right), \\
    & \mathllap{ \text{ with probability } \gamma } \\
    1, & \mathllap{ \text{ with probability } 1-\gamma } \end{array}  \right. \label{eq:test_function_form} 
\end{flalign}
for some deterministic function $\phi$ and real number $\gamma$. We know that when the data is exchangeable, the random variable $| \lbrace t \ssep g(\hY) < g(Y_t) \rbrace |$ is uniformly distributed on $\lbrace 0, 1, \cdots, T \rbrace$.

\textbf{Case 1:} Suppose $\phi$ outputs the value $0$ (no warning) for at least one possible input. Then the false negative rate is given by
\begin{align}
    \text{FNR} \geq \gamma/(1+T), 
\end{align}
and the false positive rate is given by
\begin{align}
    \text{FPR} \geq 1-\gamma, 
\end{align}
so combined we have 
\begin{align}
    \text{FPR} \geq 1-\gamma \geq 1 - (1+T) \text{FNR} \geq 1 - (1+T) \epsilon.
\end{align}

\textbf{Case 2:} Suppose $\phi$ outputs the value $0$ for none of the inputs (i.e. it always issues a warning). Then the false negative rate and false positive rate are given by 
\begin{align}
    \text{FNR} = 0, \text{FPR} = 1,
\end{align}
so we would still (trivially) have $\text{FPR} \geq 1-(1+T)\epsilon$. 

Thus, if $w$ takes the form of Equation~\ref{eq:test_function_form}, then the false positive rate must be lower bounded by $1-(1+T)\epsilon$. In other words, when $\epsilon = o(1/T)$, the false positive rate tends to $1$ when $T$ is large.  
\qed 

\end{proof}

\subsubsection{Algorithm Design Choices} There are two major components of our algorithm that can be tuned to achieve a tighter FPR: the surrogate safety score, and the trained simulator or prediction model. A surrogate safety score that is better correlated with the true safety score will lead to a better FPR, as will a more accurate simulator model. Note that patterns of inaccuracies in the simulator model will also lead to patterns of errors in the false alarms. For example, an autonomous driving simulator that is particularly inaccurate around yield signs will lead to surrogate safety scores that are not well correlated with the true safety score around yield signs; more alarms will be issued in order to maintain the specified FNR and thus the FPR will be higher.

The guarantees and analysis of Algorithm~\ref{alg:general2} will hold regardless of the surrogate safety score and simulator model used.  
However, if these components are chosen poorly, not enough data is available, or the required $\epsilon^*$ is too stringent, then this procedure could become trivial (e.g. always issuing a warning). In practice however, we find that we are able to obtain good results for an autonomous driving application with an off-the-shelf prediction model for reasonably low $\epsilon^*$-values with not too much data  (see Section~\ref{sec:experiments-driving}).

\subsection{Comparing Conformal Prediction with PAC Learning} 
We further compare the statistical learning and conformal prediction guarantees. We first clarify the notation and formally define the different assumptions. Consider a sequence of training data $(Y_1, Z_1), (Y_2, Z_2), \cdots, (Y_T, Z_T)$ and a sequence of test data $(\hY_1, \hZ_1), (\hY_2, \hZ_2), \cdots, (\hY_N, \hZ_N)$. 
Let $c_1, \cdots, c_M$ denote the unsafe subsequence of test data, i.e. 
$
    (\hY_{c_1}, \hZ_{c_1}), (\hY_{c_2}, \hZ_{c_2}), \cdots, (\hY_{c_{M}}, \hZ_{c_M})
$
is the subsequence of $(\hY_1, \hZ_1), (\hY_2, \hZ_2), \cdots, (\hY_N, \hZ_N)$ such that, for all $m$, $f(\hZ_{c_m}) < f_0$. 

Two possible assumptions that we could make on the training and test data sequences are shown in Assumptions~\ref{assump:exchangeable2} and~\ref{assump:iid}.
In particular, marginal exchangeability (Assumption~\ref{assump:exchangeable2}) is the same as Assumption~\ref{assump:exchangeable} from the previous section. The only difference here is that we explicitly state that we only require exchangeability with \textit{each} test data point. 

\begin{assumption}[Marginal exchangeability]\label{assump:exchangeable2} 
For each $n$, the sequence $(Y_1, Z_1)$, $(Y_2, Z_2)$, $\cdots$, $(Y_T, Z_T)$, $(\hY_n, \hZ_n)$ is exchangeable. 
\end{assumption} 

\begin{assumption}[Independent and identically distributed]\label{assump:iid}
The training / test data sequences $(Y_1, Z_1)$, $(Y_2, Z_2)$, $\cdots$, $(Y_T, Z_T)$, $(\hY_1, \hZ_1), (\hY_2, \hZ_2), \cdots, (\hY_N, \hZ_N)$ are drawn from an i.i.d. distribution. 
\end{assumption} 

Given a warning function, we use the random variables $\hF_{c_1}, \cdots, \hF_{c_M}$ to denote failure of the warning function, i.e. $\hF_{c_m} = \Ibb(w(\hY_{c_m}) = 0)$.
Note that $\hF_{c_m}$ depends on $w$, but we drop this dependence from our notation. 

A learning algorithm is a function that takes as input the training data $(Y_1, Z_1), (Y_2, Z_2), \cdots, (Y_T, Z_T)$ and outputs a warning function $w: X \to \lbrace 0, 1\rbrace$. There are two main paradigms for designing learning algorithms with guarantees. 

\textbf{PAC Learning:} Under Assumption~\ref{assump:iid}, a learning algorithm is $(\epsilon,\delta)$-safe if with $1-\delta$ probability (with respect to randomness of the training data) the learned warning function $w$ satisfies for some $\epsilon' < \epsilon$
\begin{align*}
\hF_{c_1}, \cdots, \hF_{c_M} \sim \mathrm{Bernoulli}(\epsilon').  \numberthis\label{eq:pac_guarantee} 
\end{align*}

\textbf{Conformal Learning:} For completeness we restate the conformal learning guarantee. A learning algorithm is $\epsilon$-safe if the learned function $w$ satisfies for some $\epsilon' < \epsilon$
\begin{align*}
    \hF_{c_m} \sim \mathrm{Bernoulli}(\epsilon'), \text{ for all } m = 1, \cdots, M. \numberthis\label{eq:conformal_guarantee} 
\end{align*}

\subsubsection{Comparing Assumptions} Conformal learning requires weaker assumptions. 
Assumption~\ref{assump:exchangeable2} is much weaker than Assumption~\ref{assump:iid}, and hence is applicable to a much larger class of problems. For example, consider an autonomous driving application where the training data are snippets from randomly sampled driving scenes (no two training data points come from the same driving scene), and the test data $(\hY_1, \hZ_1), (\hY_2, \hZ_2), \cdots, (\hY_N, \hZ_N)$ 
is a sequence of driving snippets from a random driving scene. The test data points are not independent because they are from the same scene, and hence Assumption~\ref{assump:iid} is violated. However, Assumption~\ref{assump:exchangeable2} holds because the training data and any \textit{single} test sample are snippets from randomly sampled driving scenes. 

\subsubsection{Comparing Sample Complexity} Conformal learning requires $\Theta(1/\epsilon)$ training examples of unsafe situations (Proposition~\ref{prop:safety2}), while standard analysis in PAC learning requires $\Theta(1/\epsilon^2)$ examples. For example, consider the following $(\epsilon,\delta)$-safe algorithm: based on the simulation $Y$ and the surrogate safety function $g$, we consider the family of warning functions $w_\theta(Y) = \Ibb(g(Y) < \theta)$. Our goal is to estimate the false negative rate of $w_\theta$ (denoted by $\epsilon^*(\theta)$) for each $\theta$ and select the smallest $\theta$ such that $\epsilon^*(\theta) \leq \epsilon$.  

To estimate $\epsilon^*$, we compute the (empirical) false negative rate (denoted by $\hat{\epsilon}(\theta)$) on the training data, i.e. 
\begin{align}
    \hat{\epsilon}(\theta)  = 1/M \sum_m \Ibb( g(Y_{c_m}) \geq \theta) 
\end{align}
and use a standard concentration inequality (such as Hoeffding) to bound the difference between $\epsilon^*(\theta)$ and $\hat{\epsilon}(\theta)$. Specifically, with probability $1-\delta$ 
\begin{align}
   \epsilon^*(\theta) \in \hat{\epsilon}(\theta) \pm \sqrt{\frac{ \log(1/\delta)}{2M}}. \label{eq:hoeffding}
\end{align}
Note that Eq. (\ref{eq:hoeffding}) is already the tightest bound possible up to constants~\citep{Foody2009Sample}. To verify that $w_\theta$ has a false negative rate that is less than or equal to $\epsilon$, we have to check that 
\begin{align*} 
\hat{\epsilon}(\theta) + \sqrt{\frac{ \log(1/\delta)}{2M}} \leq \epsilon,
\end{align*} 
which requires
\begin{align*} 
\sqrt{\frac{ \log(1/\delta)}{2M}}  \leq \epsilon \iff M \geq \frac{\log(1/\delta)}{2\epsilon^2}.
\end{align*} 
This means that we must have at least $\Theta(1/\epsilon^2)$ samples. 
 
In words, even a fixed $w_\theta$ requires $\Theta(1/\epsilon^2)$ samples to verify its false negative rate according to Eq. (\ref{eq:hoeffding}). Thus, finding $w_\theta$ to provably achieve low false negative rate should require at least as many, if not more, training examples.

\subsubsection{Comparing Usefulness of Guarantees} 
PAC learning and conformal learning both have advantages. PAC learning has the advantage that its i.i.d. error rate guarantee in Eq. (\ref{eq:pac_guarantee}) is stronger than the marginal error rate guarantee in  Eq. (\ref{eq:conformal_guarantee}). For example, if the downstream user is very sensitive to high variance (i.e. it is unacceptable for all test examples to fail simultaneously even if the probability is vanishingly small) then the i.i.d. error rate guarantee in Eq. (\ref{eq:pac_guarantee}) might be necessary. Nevertheless, the risk can be reduced by alternative methods such as financial tools (insurance). 
On the other hand, the conformal learning guarantee in Eq. (\ref{eq:conformal_guarantee}) has the advantage that it always holds, while the PAC learning guarantee in Eq. (\ref{eq:pac_guarantee}) only holds with $1-\delta$ probability.

To summarize, conformal learning requires much weaker assumptions and fewer samples, and its guarantees always hold (rather than with $1-\delta$ probability). PAC learning offers stronger guarantees when its assumptions and sample complexity requirements are met.

\section{Experiments: Driver Alert System}
\label{sec:experiments-driving}

We empirically validate the guarantees of our framework on a driver alert safety system using real driving data. The system should warn the driver if the driver may get into an unsafe situation, without issuing too many false alarms. We show that the false negative rate (the percentage of unsafe situations that the system fails to identify) is indeed bounded according to Proposition~\ref{prop:safety2}, while the FPR remains low. 

\subsection{Experimental Setup}

\subsubsection{Methods} We evaluate our framework on the setup described in Section~\ref{sec:setup}. 
We use Trajectron++~\citep{Salzmann2020Trajectron} as our future dynamics model (i.e. in the notation of Section~\ref{sec:setup}, \ $Y$ is the output of Trajectron++ and $g = f$). We choose the safety score $f$ as a weighted distance metric, where agents in the direction of the ego-vehicle velocity vector are considered ``closer'' than agents in the orthogonal direction. 

More specifically, we define the safety score by the Mahalanobis distance between the ego-vehicle and the agent, where the first eigenvector is aligned with the ego-vehicle's velocity vector, and the second eigenvector is orthogonal to the ego-vehicle (see Figure~\ref{fig:mahalanobis}); the magnitude of the first eigenvector is the magnitude of the velocity, and the magnitude of the second eigenvector is approximately half of a car width (we use 1m). Intuitively, this means that agents that are along the ego-vehicle's velocity vector appear closer than agents in the perpendicular direction. 

\begin{figure}[h]
    \centering
    \includegraphics[width=0.9\linewidth]{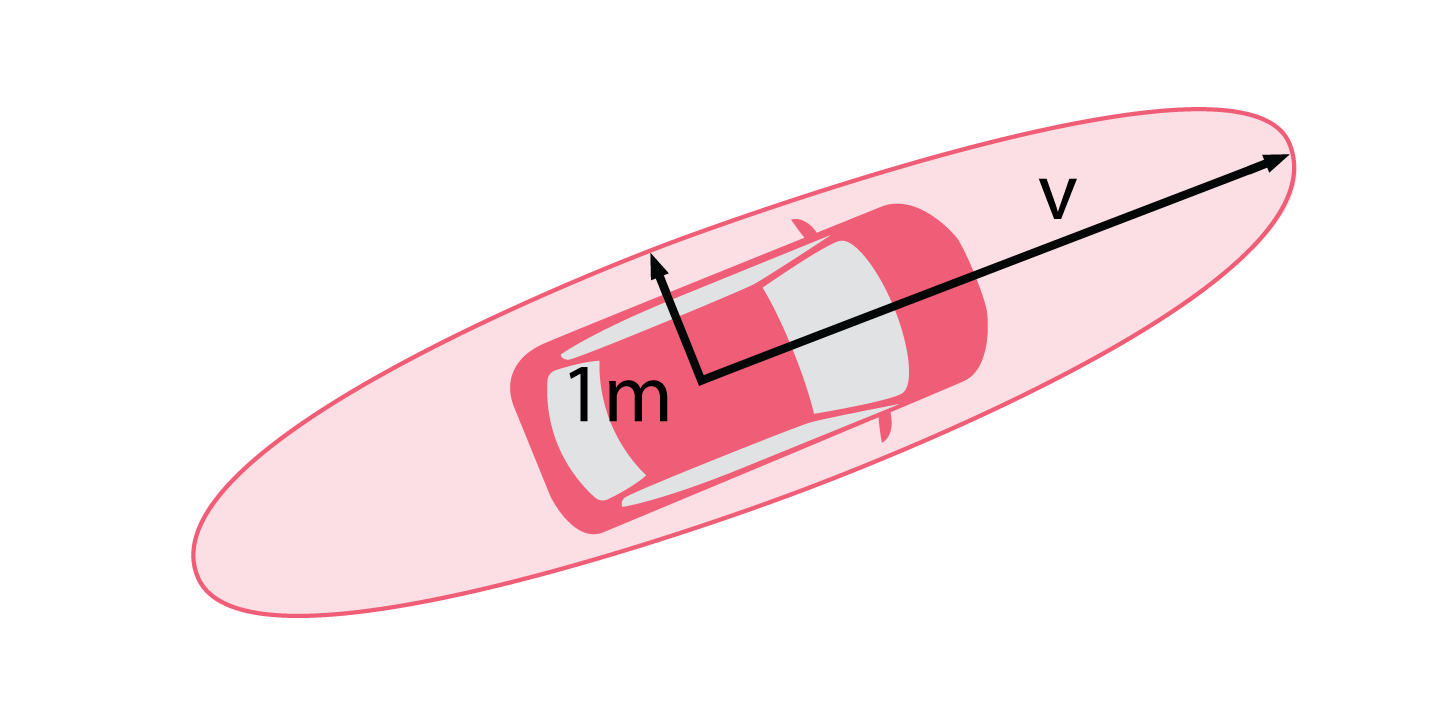}
    \caption{The Mahalanobis distance is transformed along these axes, such that agents in the direction of the velocity vector appear closer than agents in the perpendicular direction.}
    \label{fig:mahalanobis}
\end{figure}

\begin{figure*}[tb]
    \centering 
    \begin{subfigure}[t]{0.475\textwidth}
        \centering
        \includegraphics[width=\textwidth]{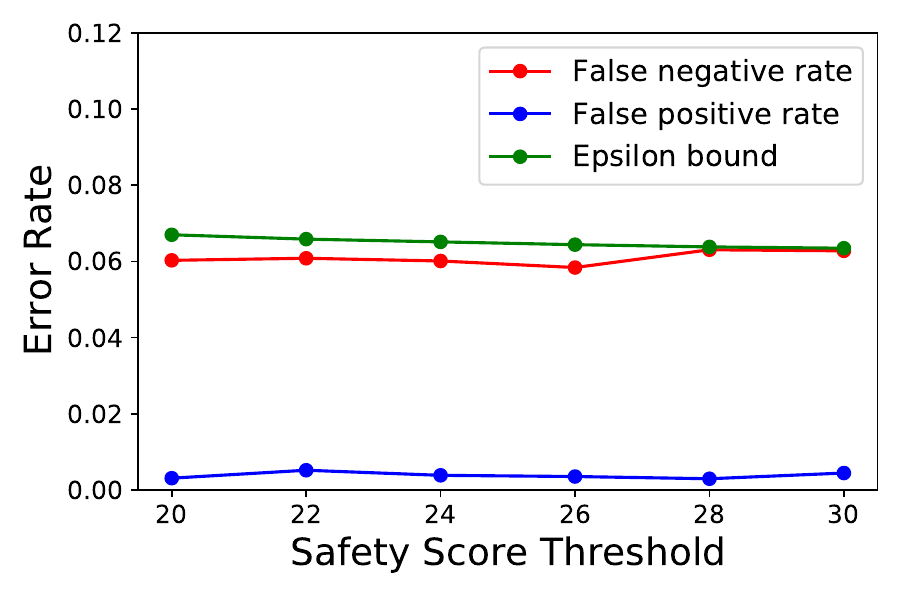}
        \caption{False negative and false positive rates on the nuScenes dataset with varying $f_0$ and $\epsilon = 0.05$. The theoretical upper bound on epsilon is shown in {\color{OliveGreen} green}. The false negative rate ({\color{red} red}) is below the upper bound and the false positive rate ({\color{blue} blue}) is very low.}
        \label{fig:vary-threshold}
    \end{subfigure}
    \hfill
    \begin{subfigure}[t]{0.475\textwidth}
        \centering
        \includegraphics[width=\textwidth]{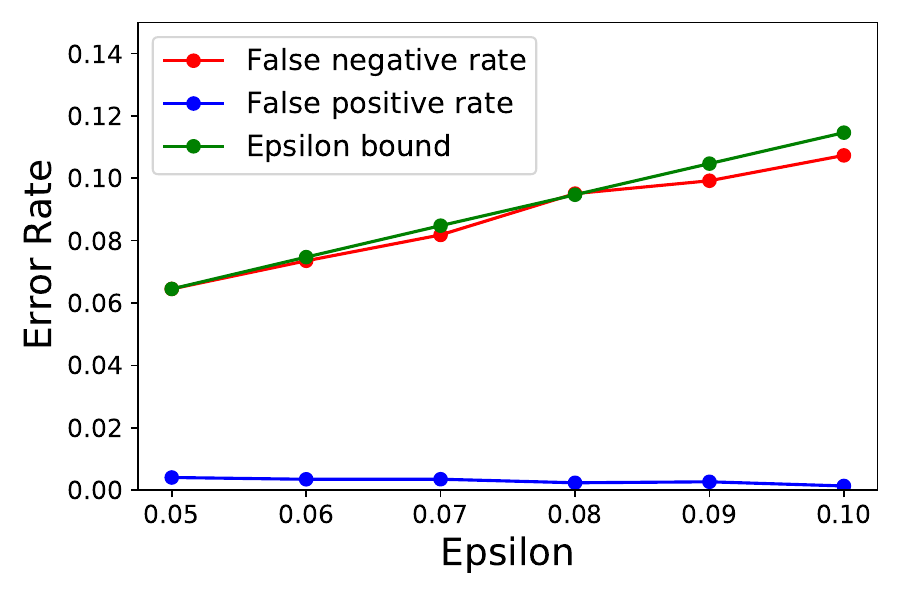}
        \caption{False negative and false positive rates on the nuScenes dataset with varying $\epsilon$ and $f_0 = 25$. The theoretical upper bound on epsilon is shown in {\color{OliveGreen} green}. The false positive rate ({\color{blue} blue}) improves with higher $\epsilon$. }
        \label{fig:vary-epsilon}
    \end{subfigure}
    \vskip\baselineskip
    \begin{subfigure}[t]{0.475\textwidth}
        \centering
        \includegraphics[width=\textwidth]{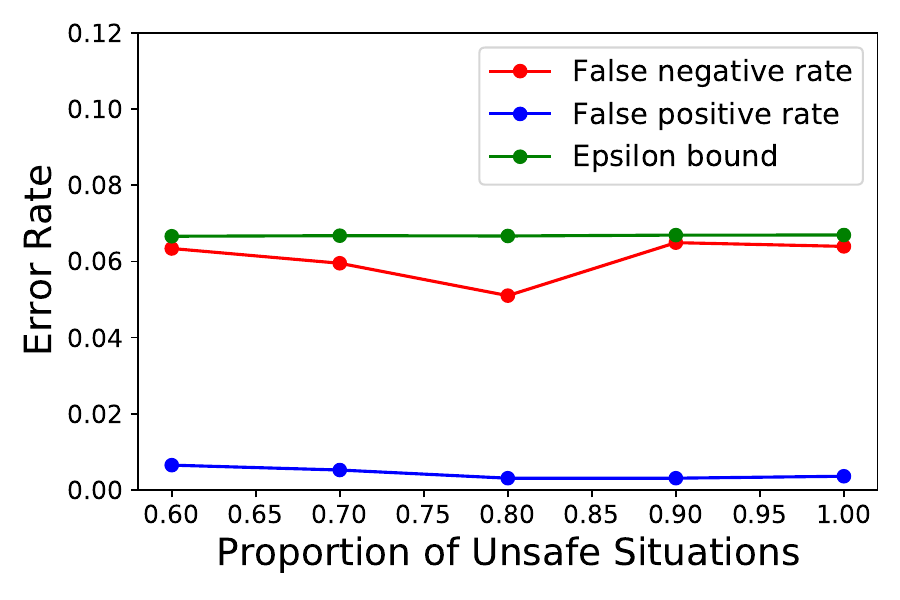}
        \caption{False negative and false positive rates on the nuScenes dataset with varying proportions of unsafe samples in the training set. The theoretical upper bound on epsilon is shown in {\color{OliveGreen} green}. Here, $\epsilon = 0.05$ and $f_0 = 25$.}
        \label{fig:vary-label-freq}
    \end{subfigure}
    \hfill
    \begin{subfigure}[t]{0.475\textwidth}
        \centering
        \includegraphics[width=\textwidth]{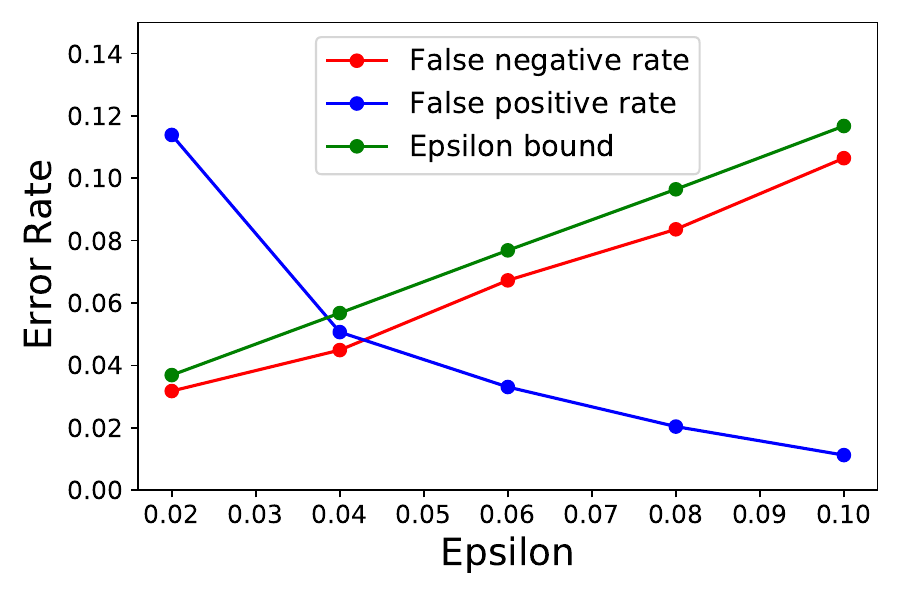}
        \caption{False negative and false positive rates on the Kaggle Lyft dataset with a varying $\epsilon$ value. Here, $f_0 = 3.5$, and there are approx. 50 unsafe examples in the training dataset.}
        \label{fig:lyft}
    \end{subfigure}
    \caption{False negative rate, false positive rate, and theoretical upper bound on $\epsilon$ for the nuScenes and Lyft datasets while varying several parameters.}
\end{figure*}

\subsubsection{Datasets} We use the nuScenes~\citep{nuscenes2019} and the Kaggle Lyft Motion Prediction~\citep{kaggleLyft} autonomous driving datasets. 
Each dataset contains multiple scenes, and each scene contains multiple trajectories. The trajectories in a scene are correlated with each other, but the different scenes are sufficiently distinct from each other to be considered exchangeable. To generate a dataset of exchangeable trajectories, we sample a single trajectory uniformly at random from each scene. 

The nuScenes dataset includes 952 scenes collected across Boston and Singapore, divided into a 697/105/150 train/val/test split (the same split used for the original Trajectron++). Each scene is 20 seconds long. The Kaggle Lyft Motion Prediction dataset includes approximately 16k scenes, divided into an 70\%/15\%/15\% train/val/test split. Each scene is 25 seconds long. Both datasets include labeled ego-vehicle trajectories as well as labeled detections and trajectories for other agents in the scene. Note that for both of these datasets, because the training split was used to train the Trajectron++ model, we use the validation split as the input training data for Algorithm~\ref{alg:general2}. A visualization of trajectory predictions output by Trajectron++ is shown in Figure~\ref{fig:trajectron_output} \citep{Salzmann2020Trajectron}.

\begin{figure}[H]
    \centering
    \includegraphics[width=\linewidth]{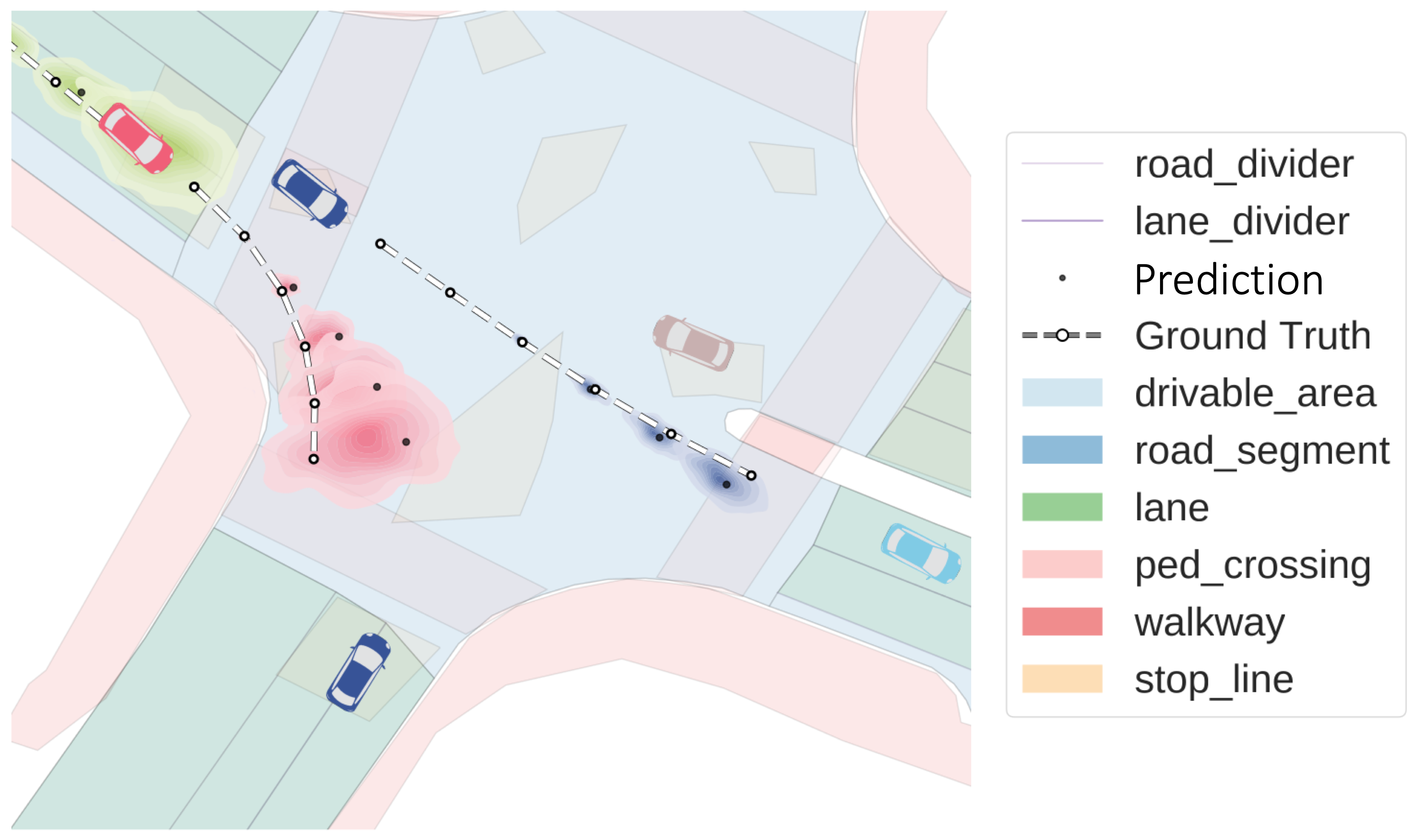}
    \caption{Visualization of trajectory predictions output by Trajectron++~\citep{Salzmann2020Trajectron}. }
    \label{fig:trajectron_output}
\end{figure}

\subsubsection{Data Splitting} To compute average performance, we use random train and test splits. 
For both datasets, we first pool together all available data points, randomly shuffle them, and separate them back into training and test splits (with the same size as the original splits). 
We ran 100 trials for each experiment, and averaged over the results. 

\subsection{Results and Discussion}

In Fig.~\ref{fig:vary-threshold}, \ref{fig:vary-epsilon}, and \ref{fig:vary-label-freq} we vary several parameters (safety threshold $f_0$, safety guarantee $\epsilon$, and proportion of unsafe situations) for nuScenes. We show qualitatively similar results for the Lyft dataset in Figure~\ref{fig:lyft}. Our main observations:

\begin{enumerate}
    \item The false negative rate (i.e. safety) is always within the theoretical bound in Proposition~\ref{prop:safety2}. We achieve these false negative rates with very little data. nuScenes has 50-70 unsafe examples in the training dataset, and Lyft has about 50. Yet, even with these few examples, we can ensure a false negative rate to within 1 or 2\% of the desired $\epsilon$. 
    \item The false positive rate (FPR) is generally very good --- well below 1\% on the nuScenes dataset. We use an off-the-shelf trajectory predictor trained on a small academic dataset; a more accurate trajectory predictor trained on industry-sized datasets might be expected to provide a more discriminative safety score (as in Figure~\ref{fig:optimal_fpr}), and thus a further improved FPR. Note that as shown in Figure~\ref{fig:fpr_vs_samples}, there is a tradeoff between $\epsilon$ and the FPR when there are few (e.g. $< 1/T$) samples, which is consistent with what our theory from Section~\ref{sec:fpr} would predict. Figure~\ref{fig:fpr_vs_samples} plots the epsilon bound as well as the false negative and false positive rates vs. the number of unsafe samples in the validation dataset; we see that when $\epsilon$ decreases as $1/T$, the false positive rate is relatively flat and low.
    \item One previously unmentioned benefit of our approach is that our method is robust to label frequency shift --- the frequency of unsafe situations can differ between the training data and test data. Observe that the output of Algorithm~\ref{alg:general2} depends only on the unsafe examples; consequently, the safety guarantee in Proposition~\ref{prop:safety2} still holds if we increase or decrease the number of safe examples. For example, the training data collection process could intentionally focus on unsafe situations, so that unsafe examples are over-represented in the training data. We empirically simulate this in Figure~\ref{fig:vary-label-freq} where we increase the proportion of unsafe examples in the training set (by deleting safe examples). The performance of our algorithm does not change qualitatively. 
\end{enumerate}

\begin{figure}[h]
    \centering
    \includegraphics[width=0.95\linewidth]{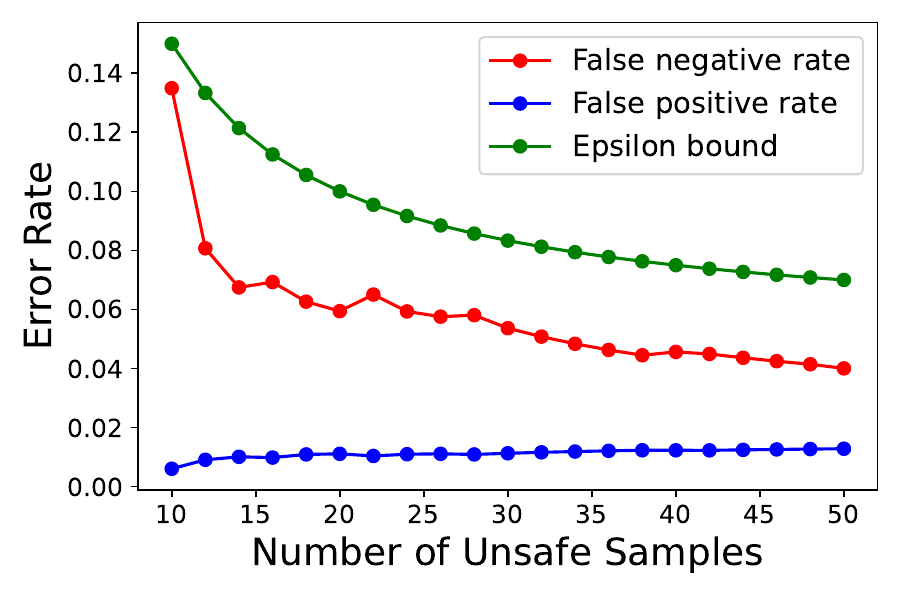}
    \caption{Epsilon bound, false negative rate, and false positive rate on the nuScenes dataset while varying the number of unsafe samples. Consistent with our theory from Section~\ref{sec:fpr}, the sum of $\epsilon$ and the false positive rate is high when there are few samples. }
    \label{fig:fpr_vs_samples}
\end{figure}

In Figure~\ref{fig:pac_error_rates}, we also plot the false negative and false positive rates at various $\epsilon$-values for the Lyft dataset using PAC estimates. 
In this experiment, we find a 99.9\% confidence bound for the $(1-\epsilon)$ quantile surrogate safety score among the unsafe samples, and issue an alert if the surrogate safety score of the new test sample is below this upper bound. Thus, with high (99.9\%) probability, the FNR will be less than or equal to $\epsilon$. As the figure demonstrates, the FPR is very high until $\epsilon$ is large, since there are too few samples to obtain a tight confidence interval.

\begin{figure}[h]
    \centering
    \includegraphics[width=0.95\linewidth]{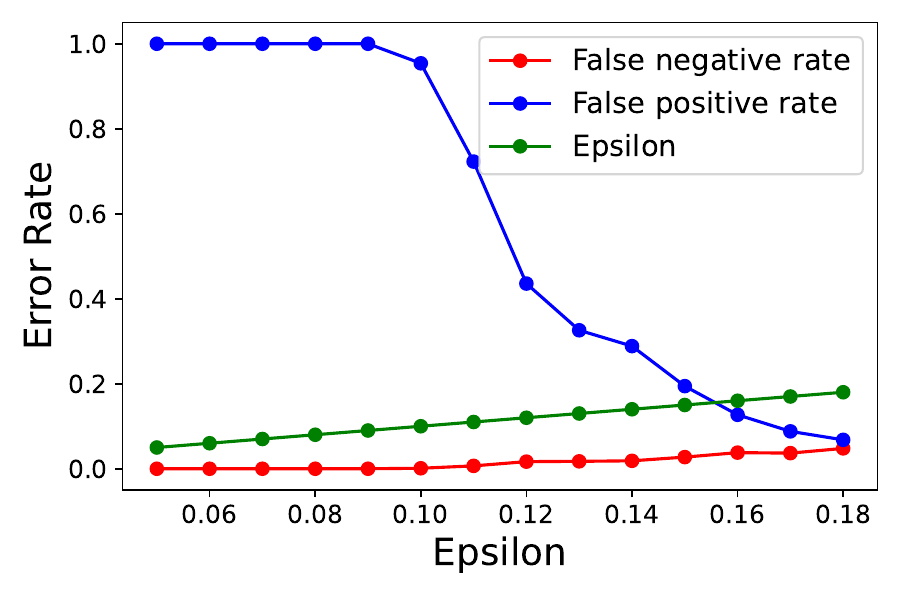}
    \caption{Epsilon, false negative rate, and false positive rate for the Lyft dataset using PAC confidence bounds. The false positive rate is very high until $\epsilon$ is large, because there are very few unsafe examples in the training dataset.}
    \label{fig:pac_error_rates}
\end{figure}

In the comparison of PAC learning and conformal learning, we argued that the main advantage of PAC learning is that the failures are i.i.d., so the total number of failures should have low variance (due to the Central Limit Theorem). However, we show empirically that users need not be overly concerned about highly correlated failures, as long as the test samples are not inherently highly correlated. 
We find that the variance on the false negative rate from different train/test splits is very low. With $\epsilon = 0.06$, for instance, it was only $0.0014$.
Table~\ref{tab:variance} displays the variance on the false negative rate calculated over the 100 trials for the Lyft dataset at each $\epsilon$ value. All of the variances are well below 0.003, suggesting that the test sequence false negative rates are clustered around $\epsilon$ (rather than having some sequences that fail on zero examples and others with catastrophic failures). As further evidence, in Figure~\ref{fig:box-plot}, we provide a representative box plot of the false negative rates over the 100 trials with $\epsilon = 0.04$. The false negative rate values are indeed clustered around $0.04$.

\begin{table}[h]
	\centering
	\setlength{\tabcolsep}{0.5em}
	\begin{tabular}{c | c c c c c}
        $\epsilon$ & $0.02$ & $0.04$ & $0.06$ & $0.08$ & $0.10$ \\ 
		\hline
        Variance     & 0.00096 & 0.0019 & 0.0014 & 0.0023 & 0.0024 \\
	\end{tabular}
	\vspace{2mm}
	\caption{\label{tab:variance}  Variance on the test sequence false negative rates at different $\epsilon$.}
\end{table}

\begin{figure}[h]
    \centering
    \includegraphics[width=0.9\linewidth]{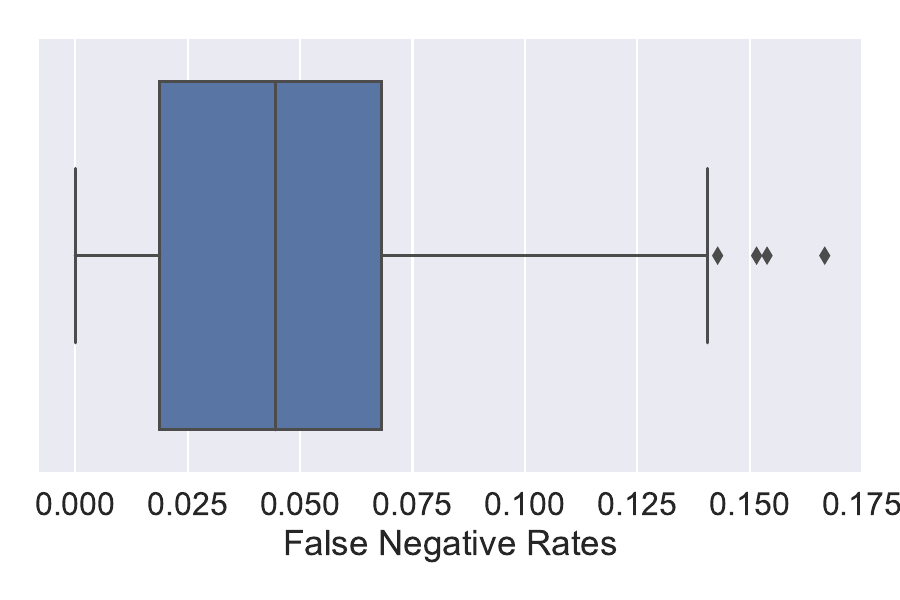}
    \caption{Box plot of the 100 false negative rates calculated over randomized train/test splits with $\epsilon = 0.04$.}
    \label{fig:box-plot}
\end{figure}

\section{Experiments: Robotic Grasping}
\label{sec:experiments-grasp}

Finally, we validate the guarantees of our framework on a robotic grasping system that should warn the user when the robot will fail to pick and transport an object. Picking is a core problem in warehouse robotics~\citep{correll2016analysis,eppner2016lessons,hernandez2016team,yu2016summary,zeng2017multi,mahler2017dex,mahler2017suction,mahler2019learning}, and failures hurt throughput (potentially even stopping the assembly line). Failures can also lead to dropped or damaged goods.

\subsection{Experimental Setup}
We again evaluate our framework on the setup described in Section~\ref{sec:setup}, using an open source dataset and model.  We use the Grasp Quality Convolutional Neural Network (GQ-CNN) from~\citet{mahler2017dex,mahler2017suction,mahler2019learning} as our predictor model and the DexNet 4.0 dataset of synthetic objects grasped with a parallel-jaw gripper~\citep{mahler2019learning}. (An example of objects from this dataset is shown in Figure~\ref{fig:dexnet_input}.)
The inputs to the GQ-CNN are a point cloud representation of an object, $\mathbf{y}$, and a candidate grasp, $\mathbf{u}$.  The GQ-CNN outputs the predicted probability, $Q_{\theta}(\mathbf{y}, \mathbf{u})$, that the candidate grasp will be able to successfully pick and transport the object.
We use this predicted probability as the surrogate safety score, $g = Q_{\theta}(\mathbf{y}, \mathbf{u})$.  We consider a candidate grasp ``unsafe'' if it will not be able to successfully pick the object (i.e.\ the true label is $Z = 0$). Note that this is exactly the ROC curve threshold tuning setup (with an additional guarantee on the false negative rate). 

\begin{figure}[h]
    \centering
    \includegraphics[width=0.7\linewidth]{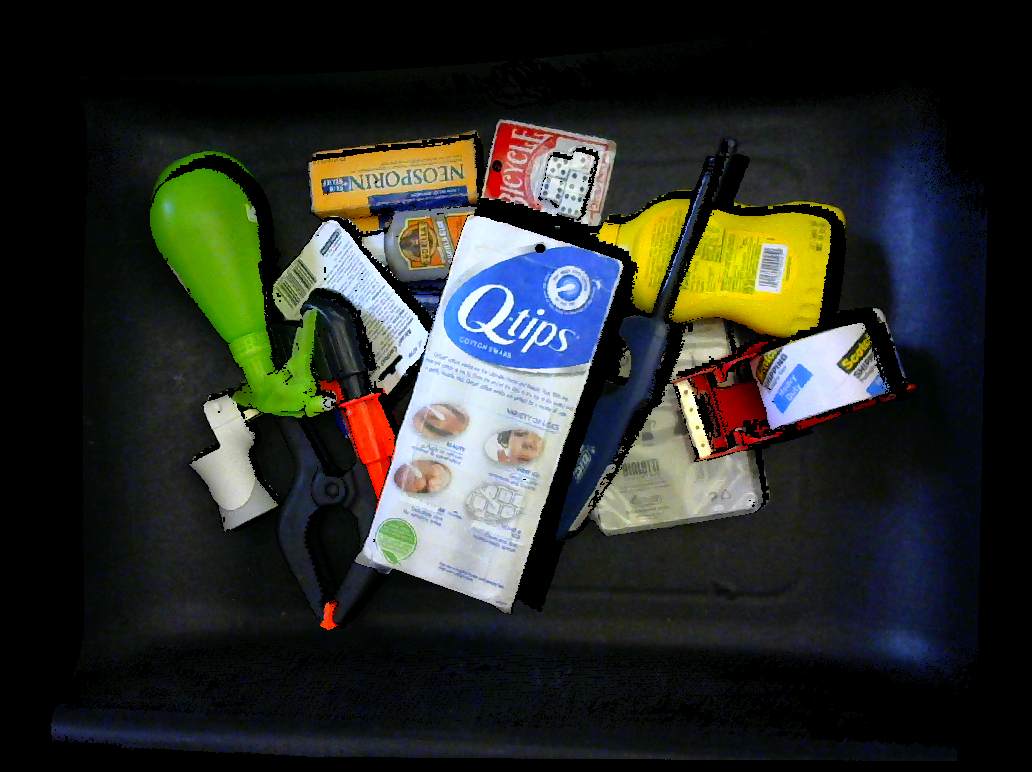}
    \caption{An example of objects from the DexNet 4.0 dataset~\citep{mahler2019learning}.}
    \label{fig:dexnet_input}
\end{figure}

The DexNet dataset of synthetic objects~\citep{mahler2017dex} includes a variety of pick attempts that were not used in training the GQ-CNN model. These samples are divided into a 50\%/50\% train/test split. Each example is labeled as a success if the robot successfully picks and places the object, and a failure otherwise. We ran 100 trials of Algorithm~\ref{alg:general2} with randomized train/test splits, and averaged over the results.

\begin{figure*}[tb]
    \centering 
    \begin{subfigure}[t]{0.475\textwidth}
        \centering
        \includegraphics[width=\textwidth]{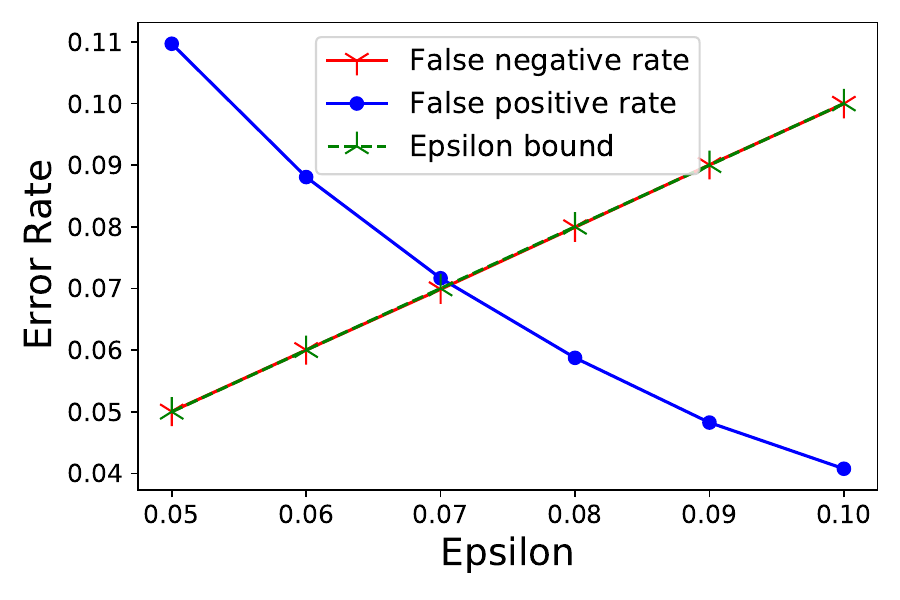}
        \caption{\textbf{Original surrogate safety score.} False negative and false positive rates on the DexNet dataset with varying $\epsilon$, using the predicted probability of a successful pick as the surrogate safety score, $g =  Q_{\theta}(\mathbf{y}, \mathbf{u})$. The theoretical upper bound on epsilon is shown in {\color{OliveGreen} green}. The false negative rate ({\color{red} red}) is within the theoretical bound and the false positive rate ({\color{blue} blue}) is reasonably low.}
        \label{fig:dexnet_noise0}
    \end{subfigure}
    \hfill
    \begin{subfigure}[t]{0.475\textwidth}
        \centering
        \includegraphics[width=\textwidth]{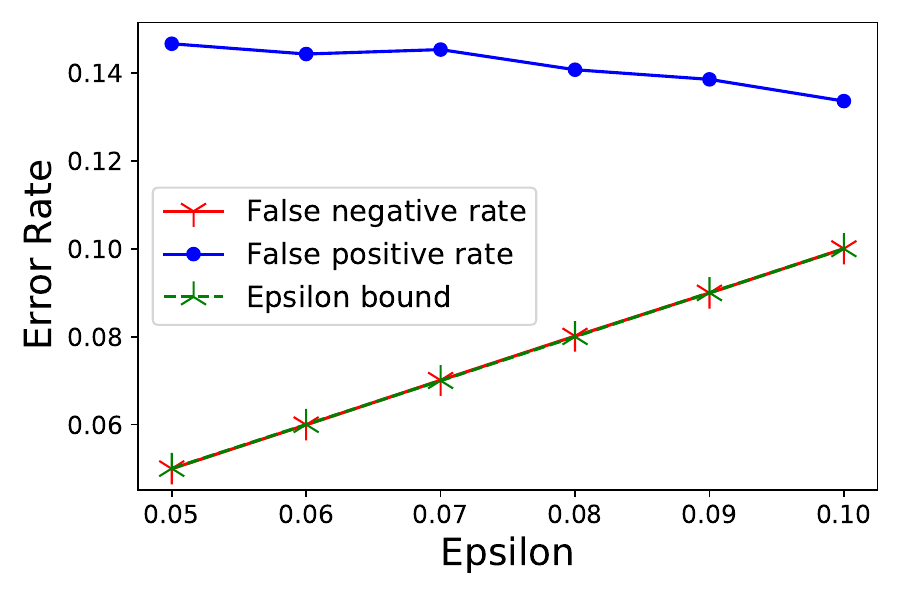}
        \caption{\textbf{Original surrogate safety score $\times$ 0.75 + noise $\times$ 0.25.} False negative and false positive rates on the DexNet dataset with varying $\epsilon$. Here, the surrogate safety score is a weighted sum of the original surrogate safety score (as in Figure~\ref{fig:dexnet_noise0}) and uniformly randomly sampled noise. The theoretical upper bound on epsilon is shown in {\color{OliveGreen} green}. The false negative rate ({\color{red} red}) is within the theoretical bound, but the false positive rate ({\color{blue} blue}) is higher than in Figure~\ref{fig:dexnet_noise0}.}
        \label{fig:dexnet_noise0.25}
    \end{subfigure}
    \vskip\baselineskip
    \begin{subfigure}[t]{0.475\textwidth}
        \centering
        \includegraphics[width=\textwidth]{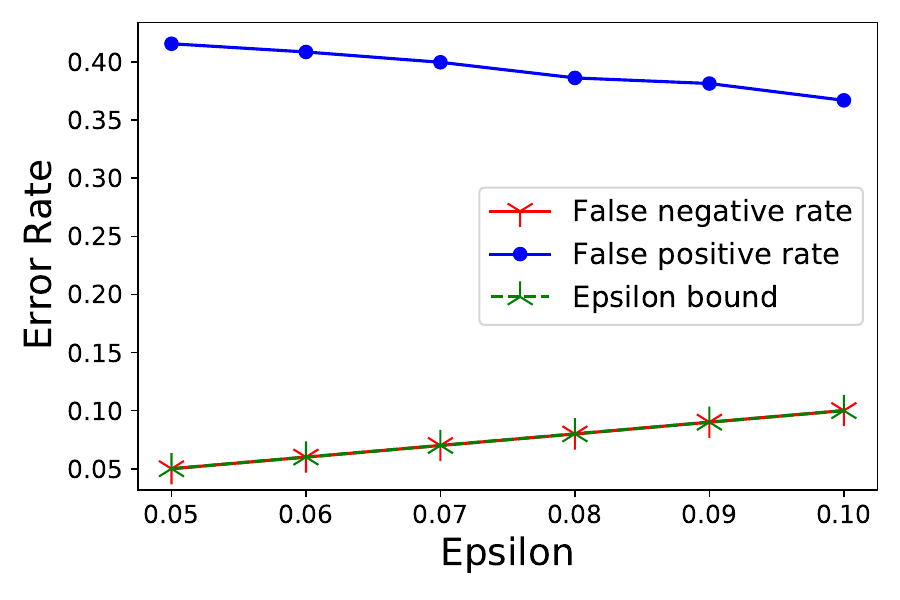}
        \caption{\textbf{Original surrogate safety score $\times$ 0.5 + noise $\times$ 0.5.} False negative and false positive rates on the DexNet dataset with varying $\epsilon$. Here, the surrogate safety score is a weighted sum of the original surrogate safety score (as in Figure~\ref{fig:dexnet_noise0}) and uniformly randomly sampled noise; in this case, there is more noise than in Figure~\ref{fig:dexnet_noise0.25}. The theoretical upper bound on epsilon is shown in {\color{OliveGreen} green}. The false negative rate ({\color{red} red}) is within the theoretical bound, but the false positive rate ({\color{blue} blue}) is higher than in Figures~\ref{fig:dexnet_noise0} and~\ref{fig:dexnet_noise0.25}.}
        \label{fig:dexnet_noise0.5}
    \end{subfigure}
    \hfill
    \begin{subfigure}[t]{0.475\textwidth}
        \centering
        \includegraphics[width=\textwidth]{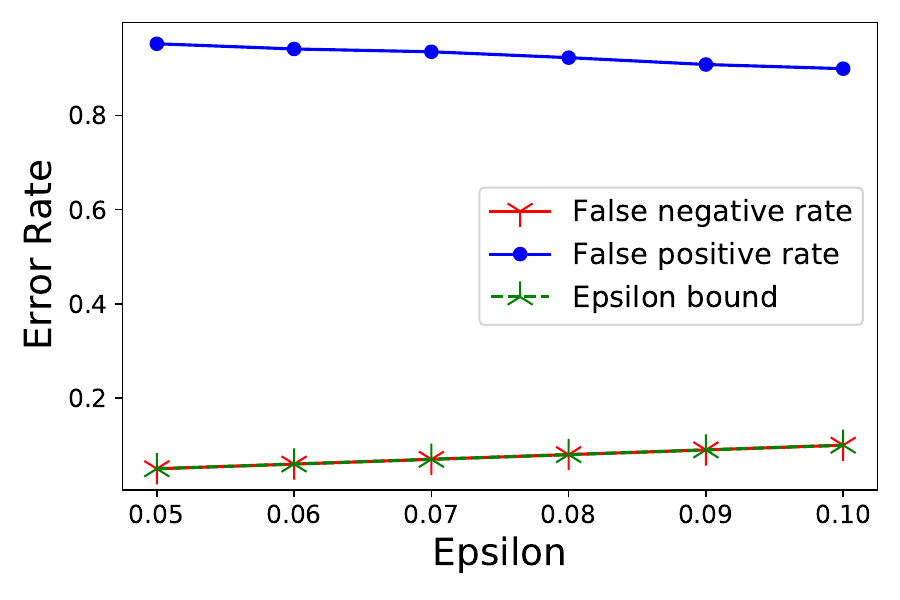}
        \caption{\textbf{Only noise as the surrogate safety score.} False negative and false positive rates on the DexNet dataset with varying $\epsilon$, using uniformly randomly sampled noise as the surrogate safety score, $g = \text{random.random()}$. The theoretical upper bound on epsilon is shown in {\color{OliveGreen} green}. Even in this case, the false negative rate ({\color{red} red}) is within the theoretical bound. However, the false positive rate ({\color{blue} blue}) is very high (with values ranging from 0.95 to 0.90), indicating that the warning system trivially always issues an alert.}
        \label{fig:dexnet_noise1}
    \end{subfigure}
    \caption{False negative rate, false positive rate, and theoretical upper bound on $\epsilon$ for the DexNet dataset of synthetic objects. The false negative rate always remains within the bound, but the false positive rate increases with worse surrogate safety scores.}
\end{figure*}

\subsection{Results and Discussion}
With $\epsilon = 0.05$, we achieved a false negative rate of 0.05, and a false positive rate of 0.11. With $\epsilon = 0.1$, we achieved a false negative rate of 0.10 and a false positive rate of 0.04. The conformal guarantees of our framework hold. A plot of the false negative and false positive rates achieved for different $\epsilon$ values is shown in Figure~\ref{fig:dexnet_noise0}. As before, the false negative rate is within the theoretical bound, and the false positive rate is reasonably low. 

In Figures~\ref{fig:dexnet_noise0.25},~\ref{fig:dexnet_noise0.5}, and~\ref{fig:dexnet_noise1}, we progressively degrade the quality of the surrogate safety score used in Figure~\ref{fig:dexnet_noise0} ($g = Q_{\theta}(\mathbf{y}, \mathbf{u})$) to demonstrate the effects of a worse surrogate safety score or a more inaccurate simulator. 
In fact, in Figure~\ref{fig:dexnet_noise1}, we replace the surrogate safety score $g$ entirely with random noise that is uniformly sampled between 0 and 1, i.e. $g(Y) = \mathcal{U}([0, 1])$. As the plot shows, the false negative rate still remains within the theoretical bound. However, the false positive rate is extremely high (with values ranging from 0.95 to 0.90), indicating that the warning system essentially always issues an alert. This makes sense intuitively, because in this case the surrogate safety score is completely non-informative.
In Figures~\ref{fig:dexnet_noise0.25} and ~\ref{fig:dexnet_noise0.5}, we progressively degrade the quality of the original surrogate safety score (the predicted probability that a pick will be successful) by adding increasing amounts of random noise. (In other words, we progressively decrease the accuracy of the simulator by adding noise.) We take a linear combination of the original surrogate safety score ($g =  Q_{\theta}(\mathbf{y}, \mathbf{u})$) and the uniformly randomly sampled noise, using weights of 0.75 and 0.25 for the original surrogate safety score and the random noise, respectively, in Figure~\ref{fig:dexnet_noise0.25}, and weights of 0.5 and 0.5 in Figure~\ref{fig:dexnet_noise0.5}. Note that the false negative rate always remains within the theoretical bound from Proposition~\ref{prop:safety2}, but the false positive rate increases with increasing noise, indicating that more alerts are being issued and it is more difficult to distinguish between safe and unsafe situations. 

Taken together, these results demonstrate that the guarantees of Algorithm~\ref{alg:general2} hold regardless of the surrogate safety score and simulator model used, and thus, our method can be used to bound the false negative rate of a warning system even if the simulator or prediction model does not have any performance guarantees. However, a surrogate safety score $g$ that is better correlated with the true safety score $f$ and a more accurate simulator model will lead to better empirical performance in terms of the false positive rate, and issue fewer unnecessary alerts. 

\section{Conclusions and Future Work}

In this work, we introduce a broadly applicable framework that uses conformal prediction to tune warning systems for robotics applications. This framework allows us to achieve provable safety assurances with very little data. We demonstrate empirically that the guarantees on the false negative rate hold for a driver alert system and for a robotic grasping system (even with only tens of examples of failure cases), while the false positive rate remains low. 

There are several exciting future directions for this work. One area of particular interest is the application of conformal prediction in non-exchangeable scenarios~\citep{tibshirani2019conformal, Barber2022ConformalPB, Gibbs2022ConformalIF, Cauchois2020RobustVC}, as many robotics settings involve highly correlated time-series data, and robots deployed in the world may encounter distribution shift. 
There have been many recent advances in the conformal prediction literature on relaxing the exchangeability assumption, and leveraging this work could lead to useful developments in robotics. 

Another intriguing extension of this work is exploring conditional safety~\citep{feldman2021improving, Gupta2021OnlineML}, with the goal of providing safety assurances conditioned on specific factors (rather than a marginal guarantee). For example, a driver assistance system that provides a 95\% guarantee on the false negative rate regardless of whether it is raining could be very useful. This system could leverage existing work on conditional coverage guarantees in conformal prediction; however, it would need to be sample-efficient to be useful for robotics settings. 

Two additional interesting and important future directions are studying deployment in industry-scale applications and studying the impact of the predictor on the data that it is trying to predict~\citep{perdomo2020performative} (e.g. examining whether and to what extent the warning system changes behavior or outcomes).

\begin{acks}
The NASA University Leadership Initiative (grant \#80NSSC20M0163) provided funds to assist the authors with their research. This article solely reflects the opinions and conclusions of its authors and not any NASA entity. The authors would like to thank Matteo Zallio for his expertise in crafting Figure~\ref{fig:system}.
\end{acks}

{\small
\bibliographystyle{SageH}
\bibliography{references}
}

\end{document}